\pgfplotsset{compat=newest}
\pgfplotsset{compat=newest}
\pgfplotsset{compat=1.18}
\definecolor{darkblue}{rgb}{0, 0.2, 0.4}
\newcommand{\sd}[1]{{{\footnotesize±}{\scriptsize#1}}}
\begin{document}


\title{Depth-Adaptive Graph Neural Networks via Learnable Bakry-Émery 
 Curvature}

\author{Asela Hevapathige}
\email{asela.hevapathige@anu.edu.au}
\affiliation{%
  \institution{School of Computing \\ Australian National University}
  \city{Canberra}
  \country{Australia}
}

\author{Ahad N. Zehmakan}
\email{ahadn.zehmakan@anu.edu.au}
\affiliation{%
  \institution{School of Computing \\ Australian National University}
  \city{Canberra}
  \country{Australia}
}

\author{Qing Wang}
\email{qing.wang@anu.edu.au}
\affiliation{%
  \institution{School of Computing \\ Australian National University}
  \city{Canberra}
  \country{Australia}
}

\begin{abstract}
Graph Neural Networks (GNNs) have demonstrated strong representation learning capabilities for graph-based tasks. Recent advances on GNNs leverage geometric properties, such as curvature, to enhance its representation capabilities by modeling complex connectivity patterns and information flow within graphs. However, most existing approaches focus solely on discrete graph topology, overlooking  diffusion dynamics and task-specific dependencies essential for effective learning. To address this, we propose integrating Bakry-Émery curvature, which captures both structural and task-driven aspects of information propagation. We develop an efficient, learnable approximation strategy, making curvature computation scalable for large graphs. Furthermore, we introduce an adaptive depth mechanism that dynamically adjusts message-passing layers per vertex based on its curvature, ensuring efficient propagation. Our theoretical analysis establishes a link between curvature and feature distinctiveness, showing that high-curvature vertices require fewer layers, while low-curvature ones benefit from deeper propagation. Extensive experiments on benchmark datasets validate the effectiveness of our approach, showing consistent performance improvements across diverse graph learning tasks. 
\end{abstract}

\maketitle

\section{Introduction}
Graph Neural Networks (GNNs) have emerged as a transformative technique for learning graph representations by combining vertex features with structural data. Their strength in capturing relational dependencies has driven their use in various tasks such as vertex classification~\cite{wu2020comprehensive,xiao2022graph}, link prediction~\cite{zhang2018link,zhang2022graph}, and community detection~\cite{chen2019supervised,souravlas2021survey}. Typically, GNNs stack multiple message-passing layers, allowing vertices to iteratively aggregate information from their neighbors and improve their representations. This design makes GNNs effective for modeling complex graph structures.


Recently, researchers have leveraged geometric tools, such as curvature~\cite{bochner1946vector}, to enhance the representational power of GNNs~\cite{toppingunderstanding, ye2019curvature}. Originally rooted in differential geometry, curvature measures how a space deviates from flatness by tracking the convergence or divergence of geodesics~\cite{bochner1946vector,ollivier2007ricci}. When applied to graphs, it captures the connectivity and relational properties of vertices and edges~\cite{lin2011ricci, ni2015ricci}, offering a more nuanced structural analysis. Incorporating curvature into GNNs can not only deepen our understanding of local connectivity but also provide insights into information flow, helping mitigate issues like oversoothing and oversquashing \cite{sun2023deepricci,fesser2024mitigating}. Among the various forms of curvature, Ricci curvature has gained prominence due to its formulation on discrete spaces, which allows for efficient computation. This efficiency makes it well-suited for analyzing the structural properties of graphs~\cite{forman1999combinatorial,ollivier2007ricci,samal2018comparative,fesser2024mitigating}.


Despite their promise, current approaches compute curvature as a pre-processing step for GNNs, limiting its integration to decoupled techniques, such as graph rewiring and sampling~\cite{toppingunderstanding,fesser2024mitigating,liu2023curvdrop,nguyen2023revisiting,ye2019curvature,li2022curvature}, which do not interact dynamically with the learning process. Consequently, these methods often overlook the interplay between vertex features, structured data, and the evolving dynamics of information diffusion that directly impact downstream tasks. However, integrating curvature dynamically during training presents significant challenges. Traditional curvature measures rely on the static, discrete graph structure, limiting their flexibility in learning contexts. Moreover, dynamic integration requires efficient curvature computation at every training step and the ability to adapt to the continuously changing training process.


 In this work, we address these challenges by leveraging Bakry-Émery curvature \cite{cushing2020bakry, mondal2024bakry, pouryahya2016bakry}. Unlike traditional curvature measures, Bakry-Émery curvature captures intrinsic graph structures while accounting for diffusion dynamics, vertex function behavior, and gradient flows. Essentially, it extends Ricci curvature by incorporating a broader range of graph features, offering a more nuanced view of structural properties and information propagation~\cite{wei2009comparison}. Notably, although Bakry-Émery curvature is computed locally, its lower bounds provide theoretical guarantees for global properties such as Markov chain convergence rates and functional inequalities \cite{weber2021entropy}. These features make it particularly well-suited for GNN architectures, where understanding the interplay between structure and diffusion can lead to more effective learning strategies.

However, integrating Bakry-Émery curvature into the GNN architecture is non-trivial. One key difficulty arises from its definition~\cite{ambrosio2015bakry}, which requires evaluating curvature over the entire function space, a computationally intractable task for large, complex graphs. To overcome this, we propose a learnable strategy that selects a task-specific subset of functions, enabling efficient curvature estimation by focusing on the most relevant structural and diffusion characteristics. Moreover, we introduce an adaptive mechanism that leverages the estimated vertex curvature to dynamically adjust the depth of message-passing layers for each vertex. This approach not only enhances the feature distinctiveness of GNN architectures but also captures finer structural variations within the graph.

To summarize, our main contributions are as follows:

\begin{itemize}
    \item \textbf{A New Perspective on Curvature in GNNs:} We propose Bakry-Émery curvature as a geometric tool for measuring vertex curvature in GNNs, capturing both structural properties and diffusion dynamics of graphs.

    \item \textbf{Efficient Curvature Approximation:}  
We develop a learnable strategy that selects a task-specific subset of functions, enabling efficient estimation of Bakry-Émery curvature on large-scale graphs.

    \item \textbf{Depth-Adaptive Mechanism for GNNs:}  
    We introduce an adaptive mechanism that uses estimated Bakry-Émery curvature to dynamically determine the message-passing depth for each vertex, thereby enhancing the feature distinctiveness of existing GNN architectures.

        \item \textbf{Theoretical Insights:} 
        We theoretically link Bakry-Émery curvature to feature distinctiveness in GNNs, showing that high-curvature vertices (fast diffusion, short mixing times) need fewer layers for distinct representations, while low-curvature vertices (slow diffusion, long mixing times) benefit from deeper architectures.

    \item \textbf{Enhanced GNN Performance:}  
   Extensive experiments show that integrating our depth-adaptive mechanism consistently improves the performance of existing GNN architectures across a range of downstream tasks.\looseness=-1
\end{itemize}

\section{Related Work}

\subsection{Curvature-based GNNs}

Curvature, originally defined on smooth manifolds via geodesics, has been successfully extended to discrete structures like graphs~\cite{lee2013introduction,ollivier2009ricci}. Recent work in GNNs has leveraged various notions of curvature, most notably discrete Ricci curvature, to capture structural relationships in graphs. For instance, \citet{ye2019curvature} employed discrete Ricci curvature to quantify the relationship between the neighborhoods of vertex pairs, using these values to define edge weights and enhance message passing. Subsequent studies \cite{toppingunderstanding, nguyen2023revisiting, liu2023curvdrop, fesser2024mitigating} have further explored how curvature influences common GNN challenges: oversmoothing (where vertex representations become overly similar, typically linked to regions of positive curvature) and oversquashing (where long-range information is inadequately propagated, often associated with negative curvature). Building on these insights, \citet{toppingunderstanding, nguyen2023revisiting, fesser2024mitigating} introduced graph rewiring strategies to improve information flow, while \citet{liu2023curvdrop} proposed an edge sampling mechanism that leverages curvature to alleviate these issues. However, a common limitation of these approaches is their treatment of curvature as a fixed, intrinsic property derived solely from the graph topology (i.e., precomputed before training), which can restrict adaptability and generalization in learning scenarios. More recently, \citet{chen2025graph} proposed a curvature-based diffusion mechanism that iteratively updates edge curvature, effectively treating curvature as a learnable parameter. Although this approach introduces greater flexibility, its lack of a rigorous theoretical foundation for curvature learning raises concerns about potential instability and overfitting, as the model must concurrently infer both the curvature and the underlying graph structure.

Our work overcomes these limitations by adaptively learning curvature tailored to the downstream task, enhancing generalizability. We provide a theoretically grounded framework that captures both graph geometry and the behavior of information propagation functions over the graph, accounting for the graph's topology and diffusion dynamics, thereby offering a principled measure of information propagation to guide the learning process. Unlike previous methods that define curvature at the edge level, we define curvature at the vertex level, offering a more nuanced representation of the graph structure.

\subsection{GNNs with Adaptive Architectures}

GNNs with adaptive architectures learn key components of their architecture during training, allowing them to optimize both structure and operations based on the graph's properties and the downstream task. Unlike conventional GNNs with fixed designs, these models incorporate learnable components that dynamically adjust aggregation strategies, assign importance to vertices/edges, and refine connectivity patterns. Numerous studies have explored adaptive behaviors of GNNs in the literature. For example, sampling-based methods selectively extract graph structures to improve computational efficiency and representation capabilities  \cite{younesian2023grapes,zhao2023learnable,lee2023towards}. 
Similarly, several studies focus on defining adaptive neighborhoods for aggregation at each vertex by utilizing learnable mechanisms that are governed by the ground truth values of the downstream task \cite{saha2023learning, guang2024graph}. Attention-based GNNs \cite{velivckovic2018graph,brodyattentive} adjust edge weights adaptively by learning attention coefficients that reflect the relative importance of neighboring vertices during the learning process. Additionally, several studies have proposed dynamic message-passing paradigms that can adaptively determine vertex message-passing operations, such as message filtering and the direction of information propagation \cite{errica2023adaptive,finkelshteincooperative}.

Our approach differs fundamentally from existing adaptive GNN methods by introducing an adaptive layer depth mechanism guided by learnable vertex curvatures. Specifically, we define per-vertex adaptive depth, allowing the model to adjust the number of message-passing steps for each vertex according to its curvature.


\section{Background}



Let \(G = (V, E, w)\) be an undirected weighted graph, where \(V\) is the set of vertices, \(E\) is the set of edges, and \(w: E \to \mathbb{R}_{+}\) is the edge weight function. For any vertex \(x \in V\), we denote by \(N(x)\) the set of vertices adjacent to \(x\).

\subsection{Graph Neural Networks}


Graph Neural Networks (GNNs) \cite{wu2019simplifying} extend deep learning to non-Euclidean domains by learning directly from graph-structured data. The typical GNN architecture follows a message-passing framework, consisting of two main operations:

\[
\begin{aligned}[t]
\text{Aggregation: } \quad m_x^{(t)} &= \text{AGG}\left(\{\!\!\{ h_y^{(t-1)} \mid y \in N(x) \}\!\!\}\right),\\[1mm]
\text{Update: } \quad h_x^{(t)} &= \text{UPD}\left(h_x^{(t-1)}, m_x^{(t)}\right).
\end{aligned}
\]
For each vertex \(x\) at layer \(t\), the aggregated message \(m_x^{(t)}\) is computed from the features of its neighbors \(N(x)\) using the aggregation function \(\text{AGG}\). The vertex representation \(h_x^{(t)}\) is then updated by combining the previous state \(h_x^{(t-1)}\) with the aggregated message via the update function \(\text{UPD}\). Different choices for these functions yield various GNN architectures \cite{xu2018powerful, velivckovic2018graph, kipf2016semi, hamilton2017inductive}.

In this work, we leverage Bakry-Émery curvature to characterize underlying diffusion patterns and structural properties of a graph. We incorporate curvature into existing GNN architectures in a learnable manner through a depth-adaptive layer mechanism to enhance their representational capacity.

\subsection{Bakry-Émery Curvature Formulation}
Bakry–Émery curvature provides a way to capture the local geometric behavior of functions on a graph~\cite{ambrosio2015bakry}. We begin by defining a few operators that are fundamental to this concept.



For a function \(f: V \to \mathbb{R}\), its weighted Laplacian at \(x\) is defined as
\begin{equation}
\Delta f(x) = \sum_{y \in N(x)} w(x, y) (f(y) - f(x)).
\label{eq1}
\end{equation}
This operator captures the weighted difference between \(f(x)\) and the values of \(f\) at its neighbors. 


To formalize the Bakry–Émery curvature, we introduce two operators that capture the local behavior of functions on the graph. The first operator, \( \Gamma(f, f)(x) \), analogous to the squared gradient, quantifies the local variability of \(f\) at \(x\) relative to its neighbors:
\begin{equation}
 \Gamma(f, f)(x) = \frac{1}{2} \left(\sum_{y \in N(x)} w(x, y) (f(y) - f(x))^2\right)
 \label{eq2}
\end{equation}
The second operator, \( \Gamma_2(f,f)(x) \), quantifies the convexity of \( f \) at \( x \) as follows:
\begin{equation}
 \Gamma_2(f, f)(x) = \frac{1}{2} \Delta \Gamma(f, f)(x) - \Gamma(f,\Delta f)(x)
 \label{eq3}
\end{equation}
which can be equivalently expanded as,
\begin{equation*}
\begin{aligned}
\Gamma_2(f, f)(x) = & \frac{1}{2} \bigg(\sum_{y \in N(x)} w(x, y) \big(\Gamma(f, f)(y) - \Gamma(f, f)(x)\big) \bigg) \\
&- \sum_{y \in N(x)} w(x, y) (f(y) - f(x)) \big(\Delta f(y) - \Delta f(x)\big).
\end{aligned}
\end{equation*}
The Bakry–Émery curvature \(\kappa(x)\) at vertex \(x\) is defined as the largest constant such that the curvature–dimension inequality
\[
\Gamma_2(f,f)(x) \ge \kappa(x)\,\Gamma(f,f)(x)
\]
holds for all functions \(f: V \to \mathbb{R}\). Equivalently, \(\kappa(x)\) represents the tightest lower bound on the ratio
\[
\frac{\Gamma_2(f,f)(x)}{\Gamma(f,f)(x)},
\]
for all functions \(f\) with \(\Gamma(f,f)(x) \neq 0\). This lower bound reflects the local geometric and functional properties of the graph at the vertex \(x\).\looseness=-1






\begin{figure*}[]
  \centering  \includegraphics[width=1\textwidth, height=0.23\textheight]{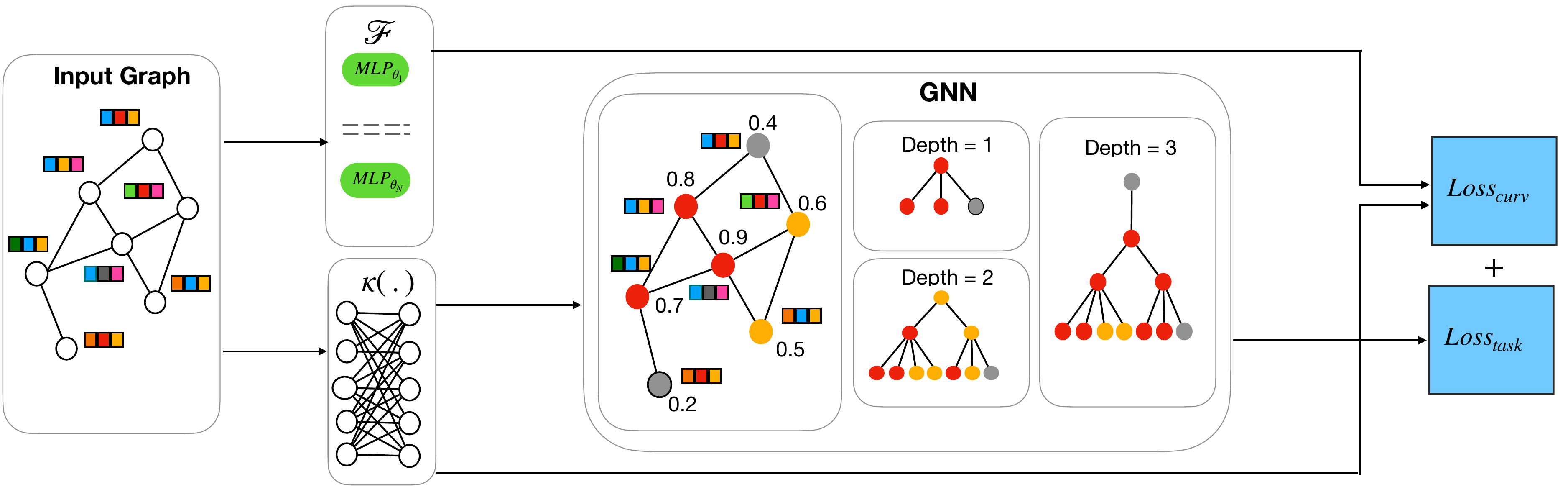}
  \caption{High-level overview of the proposed model: Vertices are clustered based on learned curvature, where message-passing depth is adaptively adjusted according to curvature. Both the curvature functions and the GNN are jointly optimized via a loss function.}
  \label{fig:model summary}
\end{figure*}

\section{Curvature and Information Propagation}
In this section, we examine the influence of Bakry–Émery curvature on local information diffusion in a graph. In particular, we establish a relationship between curvature and the local mixing time, which quantifies the rate at which information equilibrates around a vertex.
The proof is included in the appendix.

Given a vertex \(x \in V\) with Bakry–Émery curvature \(\kappa(x)\), we define the local mixing time \(\tau_x(\epsilon)\) for any tolerance \(\epsilon > 0\) as 
\[
\tau_x(\epsilon) = \inf\left\{ t > 0 : |\nabla f_t|^2(x) \leq \epsilon\,|\nabla f_0|^2(x) \text{ for all } f_0 \in \ell^2(V) \right\}
\]
Here, \( \ell^2(V) \) denotes the space of square-summable functions on \(V\), and \( f_t = e^{-tL} f_0 \) describes the evolution of \( f_0 \) under the heat semigroup associated with the Laplacian \(L\). The operator \(e^{-tL}\) is defined via the matrix exponential. 

The local gradient at \(x\) is defined by
\[
|\nabla f|(x) \coloneqq \sqrt{\Gamma(f,f)(x)}.
\]

\begin{restatable}[Mixing Time Bound]{theorem}{thmmt}\label{thm:mixing}
Let \(G = (V, E, w)\) be an undirected, weighted graph with bounded degree and Laplacian \(L\). If the local curvature-dimension inequality holds for every \(f \in \ell^2(V)\) at vertex \(x\), then for any \(\epsilon \in (0,1)\), the local mixing time satisfies
\[
\tau_x(\epsilon) \le \frac{\log(1/\epsilon)}{\kappa(x)}.
\]
\end{restatable}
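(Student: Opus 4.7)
The plan is to track the local Dirichlet-energy functional $\phi(t) := \Gamma(f_t, f_t)(x) = |\nabla f_t|^2(x)$ along the heat semigroup and show that it decays exponentially at rate proportional to $\kappa(x)$, so that $\phi(t) \le \epsilon\,\phi(0)$ once $t$ exceeds $\log(1/\epsilon)/\kappa(x)$. The main analytic device will be a Gronwall-type differential inequality driven by the local curvature-dimension bound.

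First, I would differentiate $\phi$ using $\partial_t f_t = -L f_t$ and the bilinearity of $\Gamma$, obtaining $\phi'(t) = -2\,\Gamma(f_t, L f_t)(x)$. With $L = -\Delta$ and the identity from equation~(3), namely $\Gamma(f, \Delta f)(x) = \tfrac{1}{2}\Delta\Gamma(f,f)(x) - \Gamma_2(f,f)(x)$, this rewrites as
\[
\phi'(t) \;=\; \Delta\Gamma(f_t, f_t)(x) \;-\; 2\,\Gamma_2(f_t, f_t)(x).
\]
The curvature-dimension hypothesis at $x$ then provides $\Gamma_2(f_t, f_t)(x) \ge \kappa(x)\,\phi(t)$, so the target differential inequality $\phi'(t) \le -2\kappa(x)\,\phi(t)$ will follow provided the nonlocal term $\Delta\Gamma(f_t, f_t)(x)$ can be absorbed.

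That nonlocal term is the principal obstacle, since it involves values of $\Gamma(f_t,f_t)$ at neighbors of $x$ and is not directly controlled by a curvature bound that only acts at $x$. To handle it I would adopt the standard Bakry-\'Emery interpolation: define $\psi(s) := P_s\!\bigl(\Gamma(P_{t-s}f_0, P_{t-s}f_0)\bigr)(x)$ for $s \in [0,t]$, where $P_s = e^{-sL}$. A direct computation gives $\psi'(s) = 2\,P_s\!\bigl(\Gamma_2(P_{t-s}f_0, P_{t-s}f_0)\bigr)(x)$, absorbing the Laplacian acting on $\Gamma$ into the outer semigroup. Combining the curvature-dimension inequality with the positivity of $P_s$ then yields $\psi'(s) \ge 2\kappa(x)\,\psi(s)$, and Gronwall produces $\psi(0) \le e^{-2\kappa(x)t}\,\psi(t)$, i.e.\ the pointwise gradient-decay estimate $\Gamma(f_t,f_t)(x) \le e^{-2\kappa(x)t}\,P_t\Gamma(f_0,f_0)(x)$.

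To finish, I would convert this gradient-decay estimate into the stated mixing-time bound by taking the supremum over $f_0 \in \ell^2(V)$ appearing in the definition of $\tau_x(\epsilon)$, controlling $P_t\Gamma(f_0,f_0)(x)$ by $\Gamma(f_0,f_0)(x)$ via the semigroup's contractivity together with the bounded-degree assumption, and then imposing $e^{-2\kappa(x)t} \le \epsilon$. Solving for $t$ delivers the announced inequality, with any residual factor of two reconciled by the choice of whether $L$ or $L/2$ generates the semigroup used in the definition of $\tau_x(\epsilon)$. The step I expect to demand the most care is the interpolation argument, since the stated hypothesis localises the curvature bound at $x$, whereas the classical Bakry-\'Emery proof relies on the bound holding throughout the support of the diffusion $P_s\delta_x$; justifying this reduction (or explicitly strengthening the hypothesis to neighbours reached by the heat flow up to time $t$) is where the delicate analysis lies.
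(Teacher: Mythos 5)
Your proposal shares the paper's skeleton---differentiate the local gradient energy $\phi(t)=\Gamma(f_t,f_t)(x)$ along the heat flow, invoke the curvature--dimension inequality, and close with Gr\"{o}nwall---but it diverges at exactly the step where the paper's own argument is thinnest. The paper passes from $\tfrac{d}{dt}|\nabla f_t|^2(x)=-2\Gamma(f_t,Lf_t)(x)$ and $\Gamma_2(f_t,f_t)(x)\ge\kappa(x)\,|\nabla f_t|^2(x)$ directly to $\tfrac{d}{dt}|\nabla f_t|^2(x)\le-2\kappa(x)\,|\nabla f_t|^2(x)$; unpacking the identity $\Gamma(f,\Delta f)=\tfrac12\Delta\Gamma(f,f)-\Gamma_2(f,f)$ shows this amounts to discarding the term $\Delta\Gamma(f_t,f_t)(x)$, which is precisely the nonlocal quantity you single out as the principal obstacle and which has no definite sign (it is nonpositive only when $\Gamma(f_t,f_t)$ is locally maximal at $x$). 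So your diagnosis is correct, and your interpolation $\psi(s)=P_s\bigl(\Gamma(P_{t-s}f_0,P_{t-s}f_0)\bigr)(x)$ with $\psi'(s)=2P_s\bigl(\Gamma_2(P_{t-s}f_0,P_{t-s}f_0)\bigr)(x)$ is the standard, rigorous way to absorb it.

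What the interpolation buys is a correct estimate $\Gamma(f_t,f_t)(x)\le e^{-2\kappa t}\,P_t\Gamma(f_0,f_0)(x)$; what it costs is exactly what you flag at the end, and these costs are genuine rather than cosmetic. First, applying the curvature--dimension inequality inside $P_s$ requires it to hold at every vertex charged by $P_s\delta_x$, not only at $x$, so the theorem's purely local hypothesis must be strengthened (or $\kappa(x)$ replaced by an infimum over the relevant neighborhood). Second, the right-hand side carries $P_t\Gamma(f_0,f_0)(x)$ rather than $\Gamma(f_0,f_0)(x)$, and $P_t$ is positivity-preserving and an $\ell^\infty$-contraction but is not pointwise dominated by the identity, so the reduction to the paper's definition of $\tau_x(\epsilon)$ does not follow from ``contractivity'' plus bounded degree alone. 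Neither obstacle is addressed in the paper---its proof never meets them because it silently drops the offending term---so your route is the more honest one, but as written it is a program rather than a completed proof of the theorem in the form stated.
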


This theorem indicates that higher Bakry–Émery curvature at a vertex accelerates the local decay of gradients, thereby enabling faster information diffusion within its neighborhood. In essence, vertices with high curvature act as efficient hubs, quickly equilibrating information and promoting rapid propagation throughout the graph. Conversely, low-curvature vertices, which are less effective at mixing, lead to slower information spread. Since the mixing time is inversely proportional to curvature, enhancing curvature in the network can be seen as a mechanism for improving diffusion efficiency.




\section{Depth-Adaptive GNNs}
We introduce a novel mechanism that leverages Bakry–Émery curvature to adaptively control the message-passing depth of graph neural networks (GNNs). In our framework, vertices with higher curvature (which typically indicate rapid local diffusion) terminate message passing earlier, while vertices with lower curvature continue aggregating messages to capture a broader neighborhood. The theoretical justification for this design choice is provided in Section~\ref{sec:theoretical_analysis}.

\subsection{Adaptive Layer Depth Mechansim}

Let \(t \in \mathbb{N}\) denote the iteration index in the message-passing process. To assign a stopping depth \(T(x) \in \mathbb{N}\) to each vertex \(x\), we rank the vertices based on their Bakry–Émery curvature \(\kappa(x)\). For a given threshold \(k\%\), we define
\[
T(x) \coloneqq \min\left\{ t \in \mathbb{N} \,\Big|\, \frac{1}{|V|}\sum_{y \in V} \mathbb{I}\Bigl(\kappa(y) \ge \kappa(x)\Bigr) \le \frac{k\,t}{100} \right\},
\]
where \( \mathbb{I}(\cdot) \) is the indicator function which is 1 if $\kappa(y) \ge \kappa(x)$ and 0 otherwise. This ensures that a vertex \(x\) is assigned a stopping depth \(T(x) = t\) when the proportion of vertices with curvature at least \(\kappa(x)\) falls below the threshold \(\frac{k\,t}{100}\). 

During message passing, for each vertex \(x\) and iteration \(t \le T(x)\), we update its embedding by aggregating messages from its neighbors. To incorporate the adaptive depth, the message from a neighbor \(y\) is taken from the layer \(\gamma = \min\{t-1,\, T(y)\}\). Formally, the update equations are:
\begin{align}
m_x^{(t)} &= \text{AGG}\Bigl(\{\!\!\{ h_y^{(\min\{t-1,\, T(y)\})} \mid y \in N(x) \}\!\!\}\Bigr), \label{eq:agg_depth}\\[1mm]
h_x^{(t)} &= \text{UPD}\Bigl( h_x^{(t)},\, m_x^{(t-1)} \Bigr). \label{eq:upd_depth}
\end{align}
After \(t = T(x)\) iterations, the final representation \(h_x^{(T(x))}\) is fed into a task-specific module (e.g., a classifier or regressor) \(\mathcal{C}\):
\[
\hat{y}_x = \mathcal{C}\bigl( h_x^{(T(x))} \bigr)
\]
where \( \hat{y}_x \) represents the predicted output for vertex \( x \).

This adaptive scheme is model-agnostic and can be integrated into existing GNN architectures such as GIN~\cite{xu2018powerful}, GCN~\cite{kipf2016semi}, GAT~\cite{velivckovic2018graph}, and GraphSAGE~\cite{hamilton2017inductive}. The high-level architecture of our approach is depicted in Figure \ref{fig:model summary}.








\begin{table*}[htbp]
\centering
\renewcommand\arraystretch{1.1}
\scalebox{0.85}{\begin{tabular}{c| c c c | c c c c c | c}
\specialrule{.1em}{.05em}{.05em} 
Methods & Cora & Citeseer & Pubmed & Actor & Squirrel & Cornell  &Wisconsin & Texas &  ogbn-arxiv\\ 
\toprule
{GCN} & {87.28 \sd{1.26}} & {76.68 \sd{1.64}} & {87.38 \sd{0.66}} & {30.26 \sd{0.79}}  & {36.89 \sd{1.34}} & {57.03 \sd{4.67}} & {59.80 \sd{6.99}} & {59.46 \sd{5.25}}\ & {71.74 \sd{0.29}} \\
{GCN+BEC} & {88.50 \sd{1.32}} & {80.43 \sd{0.88}} & {88.55 \sd{0.68}} & {31.66 \sd{0.58}}  & {37.82 \sd{0.60}} & {61.62 \sd{4.64}} & {69.41 \sd{2.85}} & {75.68 \sd{2.18}} & {71.91 \sd{0.17}} \\ 
{$\Delta \uparrow$} & {\textcolor{blue}{+1.22}} & {\textcolor{blue}{+3.75}} & {\textcolor{blue}{+1.17}} & {\textcolor{blue}{+1.40}}  & {\textcolor{blue}{+0.93}} & {\textcolor{blue}{+4.59}} & {\textcolor{blue}{+9.61}} & {\textcolor{blue}{+16.22}} & {\textcolor{blue}{+0.17}} \\ 
\midrule
{GAT} & {82.68 \sd{1.80}} & {75.46 \sd{1.72}} & {84.68 \sd{0.44}} & {26.28 \sd{1.73}}  & {30.62 \sd{2.11}} & {58.92 \sd{3.32}} & {55.29 \sd{8.71}} & {58.38 \sd{4.45}} & {71.54 \sd{0.30}} \\
{GAT+BEC} & {85.39 \sd{0.69}} & {76.12 \sd{3.23}} & {86.71 \sd{0.46}} & {31.22 \sd{0.56}}  & {31.61 \sd{0.47}} & {61.62 \sd{5.64}} & {63.23 \sd{3.28}} & {71.56 \sd{5.56}} & {71.84 \sd{0.31}} \\ 
{$\Delta \uparrow$} & {\textcolor{blue}{+2.71}} & {\textcolor{blue}{+0.66}} & {\textcolor{blue}{+2.03}} & {\textcolor{blue}{+4.94}}  & {\textcolor{blue}{+0.99}} & {\textcolor{blue}{+2.70}} & {\textcolor{blue}{+7.94}} & {\textcolor{blue}{+13.18}} & {\textcolor{blue}{+0.30}} \\ 
\midrule
{GraphSAGE} & {86.90 \sd{1.04}} & {76.04 \sd{1.30}} & {88.45 \sd{0.50}} & {34.23 \sd{0.99}}  & {41.61 \sd{0.74}} & {75.95 \sd{5.01}} & {81.18 \sd{5.56}} & {82.43 \sd{6.14}} & {71.49 \sd{0.27}} \\
{GraphSAGE+BEC} & {87.08 \sd{0.42}} & {78.01 \sd{1.33}} & {89.00 \sd{0.31}} & {35.56 \sd{0.87}}  & {44.53 \sd{0.81}} & {81.08 \sd{3.82}} & {88.97 \sd{1.88}} & {89.41 \sd{3.06}} & {71.83 \sd{0.14}} \\ 
{$\Delta \uparrow$} & {\textcolor{blue}{+0.18}} & {\textcolor{blue}{+1.97}} & {\textcolor{blue}{+0.55}} & {\textcolor{blue}{+1.33}}  & {\textcolor{blue}{+2.92}} & {\textcolor{blue}{+5.13}} & {\textcolor{blue}{+7.79}} & {\textcolor{blue}{+6.98}} & {\textcolor{blue}{+0.34}} \\ 
\midrule
{SGC} & {84.97 \sd{1.95}} & {75.66 \sd{1.37}} & { 87.15 \sd{0.47}} & { 25.83 \sd{1.09}}  & { 41.78 \sd{2.88}} & {55.41 \sd{5.29}} & {57.84 \sd{4.83}} & { 58.11 \sd{6.27}} & {68.74 \sd{0.12}} \\
{SGC+BEC} & {85.49 \sd{0.65}} & {78.32 \sd{0.66}} & {87.94 \sd{0.23}} & {27.97 \sd{2.18}}  & {42.03 \sd{0.62}} & {65.13 \sd{3.29}} & {63.97 \sd{9.30}} & {69.80 \sd{1.79}} & {71.39 \sd{0.30}} \\ 
{$\Delta \uparrow$} & {\textcolor{blue}{+0.52}} & {\textcolor{blue}{+2.66}} & {\textcolor{blue}{+0.79}} & {\textcolor{blue}{+2.14}}  & {\textcolor{blue}{+0.25}} & {\textcolor{blue}{+9.72}} & {\textcolor{blue}{+6.13}} & {\textcolor{blue}{+11.69}}\ & {\textcolor{blue}{+2.65}} \\ 
\midrule
{MixHop} & {87.61 \sd{0.85}} & {76.26 \sd{1.33}} & {85.31 \sd{0.61}} & {32.22 \sd{2.34}}  & {43.80 \sd{1.48}} & {73.51 \sd{6.34}} & {75.88 \sd{4.90}} & {77.84 \sd{7.73}} & {70.83 \sd{0.30}} \\
{MixHop+BEC} & {87.89 \sd{0.67}} & {76.79 \sd{0.95}} & {88.18 \sd{0.37}} & {36.91 \sd{0.93}}  & {43.97 \sd{0.67}} & {78.37 \sd{2.70}} & {93.82 \sd{1.10}} & {88.23 \sd{1.75}} & {72.04 \sd{0.26}} \\
{$\Delta \uparrow$} & {\textcolor{blue}{+0.28}} & {\textcolor{blue}{+0.53}} & {\textcolor{blue}{+2.87}} & {\textcolor{blue}{+4.69}}  & {\textcolor{blue}{+0.17}} & {\textcolor{blue}{+4.86}} & {\textcolor{blue}{+17.94}} & {\textcolor{blue}{+10.39}} & {\textcolor{blue}{+1.21}} \\
\midrule
{DIR-GNN} & {82.89 \sd{0.44}} & {76.55 \sd{0.72}} & {88.84 \sd{0.13}} & {30.12 \sd{0.65}}  & {42.50 \sd{0.80}} & {80.00 \sd{3.60}} & {79.60 \sd{3.80}} & {81.40 \sd{2.40}} & {70.72 \sd{0.26}} \\
{DIR-GNN+BEC} & {84.12 \sd{0.76}} & {77.67 \sd{1.08}} & {89.40 \sd{0.19}} & {32.46 \sd{0.43}}  & {46.19 \sd{1.26}} & {85.94 \sd{3.78}} & {89.55 \sd{1.22}} & {87.84 \sd{3.01}} & {71.55 \sd{0.26}} \\ 
{$\Delta \uparrow$} & {\textcolor{blue}{+1.23}} & {\textcolor{blue}{+1.12}} & {\textcolor{blue}{+0.56}} & {\textcolor{blue}{+2.34}} &{\textcolor{blue}{+3.69}} & {\textcolor{blue}{+5.94}} & {\textcolor{blue}{+9.95}} & {\textcolor{blue}{+6.44}} & {\textcolor{blue}{+0.83}} \\ 
\bottomrule
\end{tabular}}
\caption{
Vertex classification accuracy ± std (\%). The baseline results are sourced from \cite{zhu2020beyond,sun2024breaking,brodyattentive,han2023alternately}.}
\label{Tab:node-classification-baselines}

\end{table*}

\subsection{Learning to Estimate Curvature}
Accurate estimation of \(\kappa(x)\) is essential for determining the stopping depth \(T(x)\). We cast curvature estimation as an optimization problem that enforces the Bakry–Émery curvature-dimension inequality. Let $\hat{\kappa}(x)$ denote the estimated curvature of vertex $x$. For any function \(f: V \to \mathbb{R}\), we define the penalty
\[
\mathcal{L}(\hat{\kappa}(x), f) \coloneqq \max\bigl\{ 0,\; \hat{\kappa}(x)\,\Gamma(f,f)(x) - \Gamma_2(f,f)(x) \bigr\}.
\]
This penalty quantifies the violation of the inequality
\[
\Gamma_2(f,f)(x) \ge \hat{\kappa}(x)\,\Gamma(f,f)(x).
\]
When the inequality holds at vertex \(x\) for \(f\), we have \(\mathcal{L}(\hat{\kappa}(x), f)=0\); otherwise, the penalty is positive. Ideally, the true curvature \(\kappa_{\text{true}}(x)\) is the largest value such that
\[
\mathcal{L}(\kappa_{\text{true}}(x), f) = 0 \quad \text{for all } f: V \to \mathbb{R}.
\]

Since optimizing over the entire function space is intractable, we restrict our search to a finite, parameterized set \(\mathcal{F}\). In practice, we employ a Multi-Layer Perceptron (MLP) as a flexible function approximator. Let $
f_\theta: V \to \mathbb{R}$ 
denote the function represented by an MLP with parameters \(\theta\). By sampling parameter configurations \(\{\theta_i\}_{i=1}^N\), we obtain a candidate set
\[
\mathcal{F} = \{ f_{\theta_1}, f_{\theta_2}, \dots, f_{\theta_N} \}.
\]
We then approximate the estimated curvature by
\[
\hat{\kappa}(x) = \sup_{\kappa \in \mathbb{R}} \inf_{f \in \mathcal{F}} \mathcal{L}(\kappa, f).
\]
This optimization process seeks the maximum \(\hat{\kappa}(x)\) such that the curvature-dimension inequality is ``nearly'' satisfied for all functions \(f \in \mathcal{F}\).

We highlight two key properties of our curvature estimation approach:

\begin{enumerate}
    \item \textbf{Upper bound:} Since \(\mathcal{F}\) is a proper subset of the space of all functions \(f: V \to \mathbb{R}\), the restricted optimization can only overestimate the true curvature. Formally, if 
    \[
    \hspace*{0.8cm}\kappa_{\text{true}}(x) = \sup\bigl\{ \kappa \in \mathbb{R} : \mathcal{L}(\kappa, f)=0 \text{ for all } f: V \to \mathbb{R} \bigr\},
    \]
    then  
    \[
    \hat{\kappa}(x) = \sup_{\kappa \in \mathbb{R}} \inf_{f \in \mathcal{F}} \mathcal{L}(\kappa, f) \ge \kappa_{\text{true}}(x).
    \]
    
    \item \textbf{Smoothness:} In order for the differential operators \(\Gamma\) and \(\Gamma_2\) to be well-defined, the candidate functions in \(\mathcal{F}\) must be smooth~\cite{bakry2006diffusions}. However, standard MLPs do not inherently guarantee smooth approximations, as many common activation functions (e.g., ReLU) lack smoothness. To ensure smooth outputs, we employ an MLP with \(C^\infty\) activation functions (e.g., the sigmoid function in our implementation), thereby guaranteeing that each function \(f_\theta \in \mathcal{F}\) is infinitely differentiable.\looseness=-1
\end{enumerate}

This approach leverages the expressive power of neural networks to approximate a rich subset of functions, facilitating a robust, task-specific estimation of vertex curvature.

\subsection{GNN Training Loss Function}
We jointly optimize the GNN for the downstream task and for compliance with the curvature-dimension inequality by combining two loss terms: the task loss and the curvature loss.

Let \(\mathcal{L}_{\text{task}}\) denote the loss for the downstream task (e.g., cross-entropy for classification or mean squared error for regression). To enforce the curvature-dimension inequality, we define the curvature loss as
\[
\mathcal{L}_{\text{curv}} = \sum_{x \in V} \left( \sum_{f \in \mathcal{F}} \mathcal{L}\bigl(\hat{\kappa}(x), f\bigr) - \lambda\, \hat{\kappa}(x) \right).
\]
Here, $\lambda$ is a regularization constant controlling the tradeoff between maximizing $\hat{\kappa}(x)$ and ensuring the curvature-dimension inequality. The overall training loss is then defined as
\[
\mathcal{L}_{\text{total}} = \mathcal{L}_{\text{task}} + \mathcal{L}_{\text{curv}}.
\]
optimizes GNN performance while enforcing adherence to the graph's geometric structure via curvature.
\begin{table*}[htbp]
    \centering
    \begin{minipage}
    {0.9\textwidth}
    \centering
    \begin{adjustbox}{width=0.95\textwidth}
    \begin{tabular}{l|cc|cc|cc|cc}
        \toprule
       \multirow{2}{*}{Methods} & \multicolumn{2}{c}{Jazz} & \multicolumn{2}{c}{Cora-ML} & \multicolumn{2}{c}{Network Science} & \multicolumn{2}{c}{Power Grid}\\
        \cmidrule(lr){2-3} \cmidrule(lr){4-5} \cmidrule(lr){6-7} \cmidrule(lr){8-9} 
         & IC & LT & IC & LT & IC & LT & IC & LT\\
        \midrule
        GCN & 0.233 \sd{0.010} & 0.199 \sd{0.006} & 0.277 \sd{0.007} & 0.255 \sd{0.008} & 0.270 \sd{0.019} & 0.190 \sd{0.012} & 0.313 \sd{0.024} & 0.335 \sd{0.023}  \\
        GCN+BEC & 0.202 \sd{0.007} & 0.124 \sd{0.002} & 0.258 \sd{0.006} & 0.194 \sd{0.010} & 0.233 \sd{0.010} & 0.123 \sd{0.026} &  0.329 \sd{0.019} & 0.298 \sd{0.039} \\
        $\Delta \uparrow$ & \textcolor{blue}{+0.031} & \textcolor{blue}{+0.075} &  \textcolor{blue}{+0.019} & \textcolor{blue}{+0.061} & \textcolor{blue}{+0.037} & \textcolor{blue}{+0.067} & {-0.016} & \textcolor{blue}{+0.037} \\
        \midrule
        GAT & 0.342 \sd{0.005} & 0.156 \sd{0.100} & 0.352 \sd{0.004} & 0.192 \sd{0.010} & 0.274 \sd{0.002} & 0.114 \sd{0.008} & 0.331 \sd{0.002} & 0.280 \sd{0.015}   \\
        GAT+BEC & 0.238 \sd{0.035} & 0.123 \sd{0.002} & 0.269 \sd{0.014} & 0.184 \sd{0.015} & 0.233 \sd{0.016} & 0.126 \sd{0.010} & 0.291 \sd{0.016} & 0.277 \sd{0.013} \\
        $\Delta \uparrow$ & \textcolor{blue}{+0.104} & \textcolor{blue}{+0.033} & \textcolor{blue}{+0.083} & \textcolor{blue}{+0.008} & \textcolor{blue}{+0.041} & {-0.012} & \textcolor{blue}{+0.040} & \textcolor{blue}{+0.003} \\
        \midrule
        GraphSAGE & 0.201 \sd{0.028}  & 0.120 \sd{0.004} & 0.255 \sd{0.010} & 0.203 \sd{0.019} & 0.241 \sd{0.010} & 0.112 \sd{0.005} & 0.313 \sd{0.024} & 0.341 \sd{0.018} \\
        GraphSAGE+BEC &  0.190 \sd{0.015} & 0.060 \sd{0.010} & 0.246 \sd{0.008} & 0.176 \sd{0.008} & 0.231 \sd{0.010} & 0.073 \sd{0.012} & 0.302 \sd{0.028} & 0.205 \sd{0.013} \\
        $\Delta \uparrow$ & \textcolor{blue}{+0.011} & \textcolor{blue}{+0.060} &  \textcolor{blue}{+0.009} & \textcolor{blue}{+0.027} & \textcolor{blue}{+0.010} & \textcolor{blue}{+0.039}  & \textcolor{blue}{+0.011} & \textcolor{blue}{+0.136}\\
        \bottomrule
    \end{tabular}
    \end{adjustbox}
    \label{tab:combined_results}
    \caption{Influence estimation performance MAE ± std  under IC and LT diffusion models.}\vspace{-0.3cm}
    \label{tab:im_results}
    \end{minipage}
\end{table*}





\section{Theoretical Analysis}
\label{sec:theoretical_analysis}
To gain deeper insight into the computational and representational properties of our method, we first analyze its complexity, then explore the connection between feature distinctiveness and curvature.
 \subsection{Complexity Analysis}

We first analyze the time and space complexity of our optimization process. For each vertex \(x \in V\), let \(d_x\) denote its degree, and let \(d_{\text{max}}\) denote the maximum vertex degree in the graph. The computational cost of each operation depends on the local structure of the graph, particularly the degree distribution.

The time complexity of Equation \eqref{eq1}, the weighted Laplacian \(\Delta f(x)\), and Equation \eqref{eq2}, the squared gradient operator \(\Gamma(f, f)(x)\), is \(O(d_x)\). In contrast, computing the convexity operator \(\Gamma_2(f, f)(x)\) (Equation \eqref{eq3}) requires \(O(d_x^2)\) operations per vertex. Thus, the overall computational cost over all vertices is \(\mathcal{O}(\sum_{x \in V} d_x^2)\), which can be upper-bounded by \(O(|V| \cdot d_{\text{max}}^2)\). In our approach, the optimization is performed over a finite set \(\mathcal{F}\) of candidate functions, with \(|\mathcal{F}| = N\). Consequently, the total time complexity becomes
\[
O\bigl(N \cdot |V| \cdot d_{\text{max}}^2\bigr).
\]
Since real-world graphs are typically sparse (i.e., \(d_{\text{max}} \ll |V|\)) and \(N\) is small (in our experiments, \(N \le 5\)), the overall time complexity is manageable. The space complexity is \(\mathcal{O}(|V| + |E|)\), accounting for the storage of vertex and edge data.

We further analyze empirical runtime and parameter complexity in \Cref{subsec: ablantion-studies}.



\subsection{Feature Distinctiveness and Curvature}

We establish a connection between a vertex's feature distinctiveness in a GNN and its Bakry–Émery curvature. Here, feature distinctiveness is quantified by the variation in vertex features across layers. In particular, we show that the rate at which feature distinctiveness decays is governed by the Bakry–Émery curvature, reflecting how quickly information diffuses over the graph. This analysis assumes that vertex features evolve approximately according to heat flow driven by the graph Laplacian.

\begin{restatable}[Feature Decay Bound]{theorem}{thmfd}\label{thmfd}
Let \( G = (V, E, w) \) be an undirected, weighted graph with bounded degree. Consider a GNN with \( l \) layers, where each layer approximates the heat flow for a time step \( \Delta t \). For a vertex \( x \in V \) with Bakry-Émery curvature \( \kappa(x) \), we define the feature distinctiveness \( D(x,l) \) after \( l \) layers as
\[
D(x,l) = \frac{|\nabla f_l|^2(x)}{|\nabla f_0|^2(x)},
\]
where \( f_l \) represents the vertex features at layer \( l \). Then, the following holds: 
\[
D(x, l) \leq e^{-\kappa(x) l \Delta t}.
\]
For any \( \epsilon > 0 \), to maintain \( D(x,l) \geq \epsilon \), it suffices that the number of layers satisfy:
\[
l \leq \frac{\log(1/\epsilon)}{\kappa(x) \Delta t}.
\]

\end{restatable}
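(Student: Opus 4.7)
The plan is to reduce this statement to the mixing time bound (Theorem \ref{thm:mixing}) that has already been established, since the feature decay inequality is essentially a reformulation of the local gradient decay estimate in the discrete-layer language of GNNs.

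First, I would make the heat-flow approximation precise: each GNN layer is assumed to implement one time step of size $\Delta t$ of the heat semigroup generated by $L$, so after $l$ layers the features satisfy $f_l = e^{-l\Delta t\,L} f_0$. This identifies the continuous time parameter of Theorem \ref{thm:mixing} with $t = l\Delta t$, and the feature distinctiveness $D(x,l)$ becomes exactly the ratio $|\nabla f_t|^2(x)/|\nabla f_0|^2(x)$ appearing in the definition of $\tau_x(\epsilon)$.

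Next I would revisit the core gradient-decay estimate underlying the mixing time bound. Setting $g(t) = \Gamma(f_t,f_t)(x)$ and differentiating directly gives
\[
g'(t) = 2\,\Gamma(\partial_t f_t, f_t)(x) = -2\,\Gamma(L f_t, f_t)(x).
\]
Combining this with the Bochner-type identity $\Gamma(f,Lf) = \tfrac{1}{2} L\,\Gamma(f,f) - \Gamma_2(f,f)$, which is just a rearrangement of Equation \eqref{eq3}, and then applying the curvature-dimension inequality $\Gamma_2(f_t,f_t)(x) \geq \kappa(x)\,\Gamma(f_t,f_t)(x)$ yields a differential inequality of the form $g'(t) \leq -\kappa(x)\, g(t)$. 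Gronwall's inequality then gives $g(t) \leq e^{-\kappa(x) t} g(0)$, which after substituting $t = l\Delta t$ is exactly the stated bound $D(x,l) \leq e^{-\kappa(x) l \Delta t}$. The layer-count criterion follows by solving $e^{-\kappa(x) l \Delta t} \geq \epsilon$ for $l$.

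The main obstacle is the same one faced in Theorem \ref{thm:mixing}: justifying the \emph{pointwise} absorption of the Laplacian term $L\,\Gamma(f_t,f_t)(x)$ that appears on the right-hand side of $g'(t)$. In the continuous manifold setting, Bakry–Émery CD-type estimates typically yield bounds of the form $\Gamma(P_t f, P_t f) \leq e^{-2\kappa t} P_t\,\Gamma(f,f)$, involving an average against the heat kernel, rather than a clean pointwise estimate at a single vertex. Resolving this likely relies on a discrete-graph parabolic maximum principle, or on the bounded-degree assumption combined with an energy argument at $x$. I would not re-derive this step here; instead I would appeal to the proof of Theorem \ref{thm:mixing} directly and read off the present bound by choosing $\epsilon = e^{-\kappa(x) l \Delta t}$, so that Theorem \ref{thmfd} becomes a short corollary.
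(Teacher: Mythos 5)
Your proposal follows essentially the same route as the paper: both reduce the statement to the exponential gradient-decay estimate \(|\nabla f_t|^2(x) \le e^{-\kappa(x)t}\,|\nabla f_0|^2(x)\) obtained in the proof of Theorem~\ref{thm:mixing}, substitute \(t = l\Delta t\), and solve the resulting bound for \(l\). The pointwise-absorption issue you flag for the term \(L\,\Gamma(f_t,f_t)(x)\) is a genuine subtlety, but it is inherited from the paper's own proof of Theorem~\ref{thm:mixing} rather than introduced by your argument, so deferring to that proof is exactly what the paper itself does.
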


In a nutshell, the higher curvature vertices, with stronger local connectivity, lose feature distinctiveness more quickly in a GNN due to faster information propagation. Therefore, fewer layers are required to maintain a given level of distinctiveness \( \epsilon \) for high curvature vertices. In contrast, low curvature vertices have weaker local connections, causing feature distinctiveness to decay more slowly. To preserve \( \epsilon \) for low curvature vertices, more layers are needed. Thus, the number of layers required to maintain feature distinctiveness tends to decrease with increasing vertex curvature.

\begin{table*}[t!]
\scalebox{0.82}{\begin{tabular}{c|ccccccc|c} 
\toprule
{Methods}& {MUTAG} & {PTC-MR}  & {IMDB-BINARY} & {COX2} & {BZR}  & {PROTEINS} & {ogbg-moltox21} & {ZINC} \\ 
\toprule
{GIN} & {92.8 \sd{5.9}}  & {65.6 \sd{6.5}}  & {78.1 \sd{3.5}} & {88.9 \sd{2.3}}  & {91.1 \sd{3.4}}  & {78.8 \sd{4.1}} & {74.9 \sd{0.5}} & {0.387 \sd{0.015}}\\
{GIN+BEC} & {96.1 \sd{3.6}}  & {72.9 \sd{5.7}}  & {80.8 \sd{3.3}}  & {89.3 \sd{3.1}}  & {92.4 \sd{3.6}}  & {79.1 \sd{3.7}} & {75.7 \sd{0.7}} & {0.308 \sd{0.009}}\\
{$\Delta \uparrow$ } & {\textcolor{blue}{+3.3}}  & {\textcolor{blue}{+7.3}}  & {\textcolor{blue}{+2.7}}  & {\textcolor{blue}{+0.4}} & {\textcolor{blue}{+1.3}}  & {\textcolor{blue}{+0.3}} & {\textcolor{blue}{+0.8}}  & {\textcolor{blue}{+0.079}}\\
\midrule
{GCN} & {92.2 \sd{4.4}}  & {68.8 \sd{6.2}}  & {79.8 \sd{2.3}} & {88.5 \sd{3.8}}   & {92.6 \sd{4.8}}  & {78.8 \sd{3.9}} & {75.3 \sd{0.7}} &{0.459 \sd{0.006}} \\
{GCN+BEC} & {95.0 \sd{3.0}}  & {70.6 \sd{5.1}}  & {79.9 \sd{3.4}}  & {89.3 \sd{2.8}}  & {93.6 \sd{3.8}}  & {79.6 \sd{4.4}} & {75.5 \sd{0.5}} & {0.424 \sd{0.049}}\\
{$\Delta \uparrow$ } & {\textcolor{blue}{+2.8}}  & {\textcolor{blue}{+1.8}}  & {\textcolor{blue}{+0.1}}  & {\textcolor{blue}{+0.8}}  & {\textcolor{blue}{+1.0}}  & {\textcolor{blue}{+0.8}} & {\textcolor{blue}{+0.2}} & {\textcolor{blue}{+0.035}}\\
\midrule
{ID-GNN} & {97.8 \sd{3.7}}  & {74.4 \sd{4.2}}  & {79.3 \sd{2.9}} & {87.8 \sd{3.1}}  & {93.3 \sd{4.8}}  & {78.1 \sd{3.9}} & {74.7 \sd{0.5}} & {0.366 \sd{0.014}}\\
{ID-GNN+BEC} & {98.3 \sd{3.6}}  & {75.6 \sd{4.0}}  & {81.5 \sd{2.4}}  & {89.3 \sd{3.1}}  & {93.8 \sd{4.5}}  & {78.6 \sd{3.8}} & {75.4 \sd{0.7}} & {0.350 \sd{0.013}}\\
{$\Delta \uparrow$ } & {\textcolor{blue}{+0.5}}  & {\textcolor{blue}{+0.8}}  & {\textcolor{blue}{+2.2}}  & {\textcolor{blue}{+1.5}}  & {\textcolor{blue}{+0.5}}  & {\textcolor{blue}{+0.5}} & {\textcolor{blue}{+0.7}} & {\textcolor{blue}{+0.016}}\\
\midrule
{GraphSNN} & {94.7 \sd{1.9}}  & {70.6 \sd{3.1}}  & {78.5 \sd{2.3}} & {86.3 \sd{3.3}}  & {91.1 \sd{3.0}}  & {78.4 \sd{2.7}} & {75.5 \sd{1.1}} & {0.297 \sd{0.004}}\\
{GraphSNN+BEC} & {96.1 \sd{2.5}}  & {72.9 \sd{7.4}}  & {79.4 \sd{2.7}}  & {88.9 \sd{2.0}}  & {92.1 \sd{4.9}}  & {79.2 \sd{3.9}} & {75.3 \sd{0.7}} & {0.265 \sd{0.005}}\\
{$\Delta \uparrow$ } & {\textcolor{blue}{+1.4}}  & {\textcolor{blue}{+2.3}}  & {\textcolor{blue}{+0.9}}  & {\textcolor{blue}{+2.6}}  & {\textcolor{blue}{+1.0}}  & {\textcolor{blue}{+0.8}} & {-0.2} & {\textcolor{blue}{+0.032}}\\
\midrule
{NC-GNN} & {92.8 \sd{5.0}}  & {71.8 \sd{6.2}}  & {78.4 \sd{4.0}} & {88.4 \sd{3.3}}  & {92.6 \sd{4.3}}  & {78.4 \sd{3.1}} & {75.1 \sd{0.4}} & {0.448 \sd{0.095}}\\
{NC-GNN+BEC} & {96.0 \sd{2.4}}  & {75.0 \sd{4.6}}  & {79.6 \sd{3.6}}  & {88.9 \sd{2.4}}  & {93.6 \sd{4.9}}  & {79.6 \sd{2.9}} & {75.6 \sd{0.5}} & {0.356 \sd{0.017}}\\
{$\Delta \uparrow$ } & {\textcolor{blue}{+3.2}}  & {\textcolor{blue}{+3.2}}  & {\textcolor{blue}{+1.2}}  & {\textcolor{blue}{+0.5}}  & {\textcolor{blue}{+1.0}}  & {\textcolor{blue}{+1.2}} & {\textcolor{blue}{+0.5}} & {\textcolor{blue}{+0.092}}\\
\bottomrule
\end{tabular}}
\caption{Graph classification accuracy ± std (\%) for TU datasets, ROC ± std (\%) for the OGB dataset, and graph regression MAE ± std for ZINC dataset, with baseline results sourced from \citet{wijesinghe2022new}, \citet{feng2022powerful}, and \citet{hu2020open}.}
\label{Tab:graph-classification-baselines}
\end{table*}

\section{Experiments}

We conduct experiments on four widely used benchmark tasks: vertex classification, vertex regression, graph classification, and graph regression, utilizing a total of 21 datasets and comparing against 10 baselines to evaluate the effectiveness of our approach.

\subsection{ Experimental Setup}

\subsubsection{Datasets}

For vertex classification, we use three widely adopted graphs exhibiting strong homophily: Cora, PubMed, and Citeseer \cite{sen2008collective,namata2012query}, as well as five datasets with heterophily: Texas, Wisconsin, Actor, Squirrel, and Cornell \cite{rozemberczki2021multi,tang2009social}. Additionally, we include a large-scale dataset from the Open Graph Benchmark, ogbn-arxiv \cite{hu2020open}. For vertex regression, we utilize four real-world datasets (Jazz, Network Science, Cora-ML, and Power Grid) \cite{rossi2015network,mccallum2000automating}. For graph classification, we use six small-scale real-world datasets from TU Datasets (MUTAG, PTC-MR, COX2, BZR, PROTEINS, and IMDB-BINARY) \cite{morris2020tudataset}, as well as a large-scale molecular dataset from the Open Graph Benchmark, ogbg-moltox21 \cite{hu2020open}. Finally, we employ the ZINC 12k dataset \cite{dwivedi2023benchmarking} for graph regression.

\subsubsection{Baselines}

For vertex classification and regression baselines, we use three classical GNNs—GCN \cite{kipf2016semi}, GAT \cite{velivckovic2018graph}, and GraphSAGE \cite{hamilton2017inductive}—and for vertex classification, we further employ three enhanced GNNs—SGC \cite{wu2019simplifying}, MixHop \cite{abu2019mixhop}, and DIR-GNN \cite{rossi2024edge}.  For graph classification and regression baselines, we select two GNNs whose expressive power is known to be upper-bounded by the 1-dimensional Weisfeiler-Lehman (1-WL) algorithm \cite{zhang2024expressive}: GIN \cite{xu2018powerful} and GCN \cite{kipf2016semi}. Additionally, we include three GNNs designed to possess expressive power beyond 1-WL: ID-GNN \cite{you2021identity}, GraphSNN \cite{wijesinghe2022new}, and NC-GNN \cite{liuempowering}.

\subsubsection{Evaluation settings}

For vertex classification, we use the feature vectors and class labels with 10 random splits for all datasets except the OGB dataset. In these splits, 48\% of the vertices are assigned for training, 32\% for validation, and 20\% for testing, following \citet{peigeom}. For the OGB dataset, we adopt the setup described by \citet{hu2020open}. For the vertex regression task, we focus on influence estimation \cite{xia2021deepis,ling2023deep}, a key problem in network science, where the goal is to predict each vertex's influence based on graph structure and diffusion dynamics. We consider the Linear Threshold (LT) \cite{granovetter1978threshold} and Independent Cascade (IC) \cite{goldenberg2001talk} models, which capture different information spread mechanisms, with an initial activation set comprising 10\% of vertices. Following \citet{ling2023deep}, we use their data splits and adopt 10-fold cross-validation, as in \citet{errica2020fair}. In graph classification, we adopt the setup outlined by \citet{hu2020open} for the OGB dataset. For the TU datasets, we follow the experimental setup provided by \citet{feng2022powerful}, reporting the mean and standard deviation of the best accuracy across 10 test folds. For graph regression, we adhere to the setup provided by \citet{dwivedi2023benchmarking}. We report baseline results from previous works using the same experimental setup, where available. If such results are unavailable, we generate baseline results by adopting the hyperparameter configurations specified in the original papers.

Since the benchmark graphs are unweighted and the Bakry-Émery curvature operators (Equations \eqref{eq1}, \eqref{eq2}, and \eqref{eq3}) require edge weights, we employ learnable parameters for these weights. Additionally, the functions used to evaluate the Bakry-Émery curvature inequality take vertex features as inputs. 

Comprehensive details regarding downstream tasks, baselines, dataset statistics, and model hyperparameters can be found in the Appendix.

\subsection{Main Results}

We use GNN+BEC to represent the GNN model incorporating our Bakry-Émery curvature-based adaptive layer depth mechanism, and $\Delta \uparrow$ indicates the performance improvement of our approach compared to the baseline.

\subsubsection{Vertex Classification}

Table~\ref{Tab:node-classification-baselines} presents the results for the vertex classification task. Our approach consistently outperforms the baselines in both homophilic and heterophilic settings. Notably, our adaptive layer mechanism achieves greater performance improvements on heterophilic graphs compared to homophilic ones. We attribute this to the ability of our method to learn data-driven curvature values that leverage heterophilic label patterns, effectively capturing the complex and diverse relationships between vertices with dissimilar labels, which leads to enhanced aggregation and feature extraction. Additionally, our approach improves the performance of GNNs designed with inductive biases for heterophilic graph settings, such as MixHop, and DIR-GNN. This demonstrates that our method complements these models by capturing additional structural and label-dependent information that might otherwise be overlooked.

We also observe that our depth-adaptive mechanism significantly reduces the standard deviation of accuracy across baseline models, enhancing stability and consistency. This reduced variance ensures reliable and repeatable results, essential for real-world applications.\looseness=-1

\subsubsection{Vertex Regression}

Table \ref{tab:im_results} presents the results for the influence estimation task, formulated as a vertex regression problem. The results indicate that our approach significantly enhances baseline performance in predicting influence by effectively adapting to various underlying diffusion processes. By dynamically adjusting curvature based on the propagation mechanisms of different diffusion models, our method enhances generalization of the baseline GNN models across diverse network conditions.

\subsubsection{Graph Classification} We present graph classification results in Table \ref{Tab:graph-classification-baselines}. These results show that our adaptive layer mechanism achieves notable to moderate improvements over baseline methods across all datasets. Significant gains are observed with the GIN model, which is commonly regarded as the standard baseline for graph classification tasks. Moreover, our approach enhances GNNs with higher expressive power than 1-WL by improving feature distinctiveness, thus offering a more effective mechanism for distinguishing between graph structures. 

\subsubsection{Graph Regression} The regression results for ZINC 12K dataset is depicted in Table \ref{Tab:graph-classification-baselines}. Similar to the graph classification, the proposed adaptive layer mechanism consistently improves performance over baseline methods. These results underscore the effectiveness of our approach in enhancing feature representation and regularization, thereby improving the overall performance of graph regression tasks. Although vertex-specific, our curvature provides insights into the global graph structure, enhancing performance in tasks that depend on holistic properties of graphs.

\subsection{Ablation Studies}\label{subsec: ablantion-studies}

In this section, we conduct experimental analysis to assess different aspects of our work.

\subsubsection{Hyper-Parameter Analysis}

In this experiment, we analyze the model's sensitivity to its two primary hyper-parameters. Using GraphSAGE as the base model, we evaluate vertex classification performance on two datasets. Figure \ref{fig:hyperparameter_analysis} (a) illustrates how performance varies with the number of functions used to approximate curvature. As expected, increasing the number of functions enhances performance by enabling the model to capture multiple aspects of information propagation. However, this improvement is only observed up to a certain threshold, beyond which the additional complexity—introduced by an increased number of learnable parameters—begins to outweigh the benefits, ultimately leading to performance degradation. Notably, the model achieves good performance with a relatively small number of functions, which is advantageous from a computational efficiency perspective. 

\begin{figure}[H]
    \centering
    \begin{tikzpicture}
        \begin{axis}[
            hide axis,
            axis lines=none,
            scale only axis,
            height=0pt,
            width=0pt,
            legend style={
                font=\small,  
                legend cell align=left,
                legend columns=2,  
                column sep=1em,    
                draw=none,
                at={(0,0)},
                anchor=center
            },
            legend image post style={scale=0.8}  
        ]
            \addplot[color=blue, mark=square, mark size=3] coordinates {(0,0)};
            \addlegendentry{Citeseer}
            \addplot[color=red, mark=diamond, mark size=4] coordinates {(1,0)};
            \addlegendentry{Wisconsin}
        \end{axis}
    \end{tikzpicture}
    
    \vspace{-0.7em}  
    
    \begin{minipage}[b]{0.49\linewidth}
        \centering
        \subcaption{}
        \resizebox{0.9\linewidth}{!}{
        \begin{tikzpicture}
            \begin{axis}[    
                xlabel={\LARGE Number of Functions (N)},    
                ylabel={\LARGE Accuracy(\%)},
                xlabel style={font=\normalsize, font=\bfseries},
                ylabel style={font=\normalsize, font=\bfseries},
                xmin=1, xmax=16,    
                ymin=70, ymax=100,    
                xtick={1 ,2, 3, 5, 8, 12, 16},    
                ytick={70, 85, 100},    
                ymajorgrids=true,    
                grid style=dashed,
                legend style={legend pos=none}
            ]
            \addplot[color=blue, mark=square, mark size=4] coordinates {
                (1, 76.00)(2, 76.79) (3, 78.01)(5, 77.89)(8, 78.14)(12, 78.99)(16, 76.7)
            };
            \addplot[color=red, mark=diamond, mark size=5] coordinates {
                (1, 87.05)(2, 87.05) (3, 88.97)(5, 89.55)(8, 86.76)(12, 85.58)(16, 85.82)
            };
            \end{axis}
        \end{tikzpicture}
        }
    \end{minipage}\hfill
    \begin{minipage}[b]{0.49\linewidth}
        \centering
        \subcaption{}
        \resizebox{0.9\linewidth}{!}{
        \begin{tikzpicture}
            \begin{axis}[    
                xlabel={\LARGE Threshold (k\%)},    
                ylabel={\LARGE Accuracy(\%)},
                xlabel style={font=\normalsize, font=\bfseries},
                ylabel style={font=\normalsize, font=\bfseries},
                xmin=5, xmax=30,    
                ymin=70, ymax=100,    
                xtick={5, 10,15,20,25, 30},    
                ytick={70, 85, 100},    
                ymajorgrids=true,    
                grid style=dashed,
                legend style={legend pos=none}
            ]
            \addplot[color=blue, mark=square, mark size=4] coordinates {
                (5, 77.11)(10, 78.01)(15, 76.98)(20, 76.07)(25, 76.00)(30, 75.98)
            };
            \addplot[color=red, mark=diamond, mark size=5] coordinates {
                (5, 87.02)(10, 88.08)(15, 88.05)(20, 89.55)(25, 88.97)(30, 89.55)
            };
            \end{axis}
        \end{tikzpicture}
        }
        
    \end{minipage}\vspace{-0.2cm}
    \caption{Model sensitivity to hyper-parameters}
    \label{fig:hyperparameter_analysis}
\end{figure}
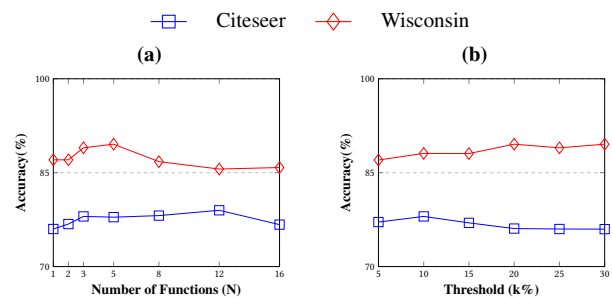

Figure \ref{fig:hyperparameter_analysis} (b) illustrates how model performance varies with the threshold. For Citeseer, optimal performance is achieved at a lower threshold (5–10\%), whereas Wisconsin benefits from a higher threshold. This difference stems from dataset characteristics: Citeseer, being homophilic, has adjacent vertices with the same labels, allowing deeper aggregation to enhance information mixing. In contrast, Wisconsin, a heterophilic dataset, benefits from shallow depth aggregation to prevent mixing information from differently labeled neighboring vertices.

\subsubsection{Oversmoothing Analysis}

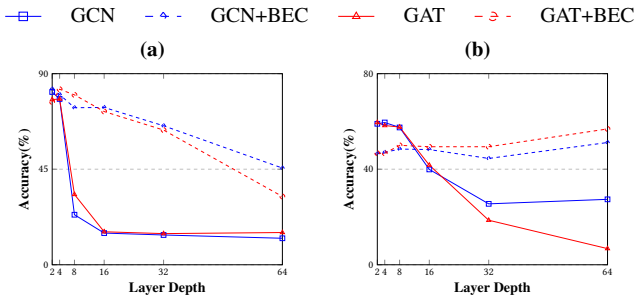
\begin{figure}[H]
    \centering
    
    \begin{tikzpicture}
        \begin{axis}[
            hide axis,
            scale only axis,
            height=0pt,
            width=0pt,
            legend style={
                font=\small,  
                legend cell align=left,
                legend columns=4,  
                column sep=1em,    
                draw=none,
                at={(0.5,1.05)},
                anchor=center
            },
            legend image post style={scale=0.8}  
        ]
            \addplot[color=blue, mark=square] coordinates {(0,0)};
            \addlegendentry{GCN}
            \addplot[color=blue, mark=diamond, style=dashed] coordinates {(0,0)};
            \addlegendentry{GCN+BEC}
            \addplot[color=red, mark=triangle] coordinates {(0,0)};
            \addlegendentry{GAT}
            \addplot[color=red, mark=o, style=dashed] coordinates {(0,0)};
            \addlegendentry{GAT+BEC}
        \end{axis}
    \end{tikzpicture}
    
    \vspace{-0.7em}  
    
    \begin{minipage}[b]{0.49\linewidth}
        \centering
        \subcaption{}
        \resizebox{0.9\linewidth}{!}{
        \begin{tikzpicture}
            \begin{axis}[    
                xlabel={\LARGE Layer Depth},    
                ylabel={\LARGE Accuracy(\%)},
                xlabel style={font=\normalsize, font=\bfseries},
                ylabel style={font=\normalsize, font=\bfseries},
                xmin=2, xmax=64,    
                ymin=0, ymax=90,    
                xtick={2, 4,8,16,32,64},    
                ytick={0, 45, 90},    
                ymajorgrids=true,    
                grid style=dashed,
                legend style={legend pos=none}
            ]
            \addplot[color=blue, mark=square] coordinates {
                (2, 81.38)(4, 78.09)(8, 23.56)(16, 14.92)(32, 14.02)(64, 12.48)
            };
            \addplot[color=blue, mark=diamond, style=dashed] coordinates {
                (2, 82.77)(4, 80.18)(8, 73.96)(16, 74.09)(32, 65.46)(64, 45.54)
            };
            \addplot[color=red, mark=triangle] coordinates {
                (2, 77.79)(4, 78.06)(8,  33.11)(16,  15.53)(32,  14.67)(64,  15.18)
            };
            \addplot[color=red, mark=o, style=dashed] coordinates {
                (2, 76.62)(4, 82.67)(8, 80.01)(16, 72.16)(32, 63.44)(64, 32.14)
            };
            \end{axis}
        \end{tikzpicture}
        }
    \end{minipage}\hfill
    \begin{minipage}[b]{0.49\linewidth}
        \centering
        \subcaption{}
        \resizebox{0.9\linewidth}{!}{
        \begin{tikzpicture}
            \begin{axis}[    
                xlabel={\LARGE Layer Depth},    
                ylabel={\LARGE Accuracy(\%)},
                xlabel style={font=\normalsize, font=\bfseries},
                ylabel style={font=\normalsize, font=\bfseries},
                xmin=2, xmax=64,    
                ymin=0, ymax=80,    
                xtick={2, 4,8,16,32,64},    
                ytick={0, 40, 80},    
                ymajorgrids=true,    
                grid style=dashed,
                legend style={legend pos=none}
            ]
            \addplot[color=blue, mark=square] coordinates {
                (2, 59.01)(4, 59.55)(8, 57.48)(16, 39.91)(32, 25.50)(64, 27.39)
            };
            \addplot[color=blue, mark=diamond, style=dashed] coordinates {
                (2, 46.47)(4, 47.05)(8, 48.43)(16, 48.23)(32, 44.50)(64, 51.17)
            };
            \addplot[color=red, mark=triangle] coordinates {
                (2, 59.54)(4, 58.28)(8,  57.65)(16,  41.71)(32, 18.64)(64,  06.76)
            };
            \addplot[color=red, mark=o, style=dashed] coordinates {
                (2, 46.66)(4, 46.66)(8, 50.00)(16, 49.41)(32, 49.41)(64, 56.86)
            };
            \end{axis}
        \end{tikzpicture}
        }
        
    \end{minipage}\vspace{-0.2cm}
    \caption{Oversmoothing comparison: (a) Cora; (b) Texas.}
    \label{fig:oversmoothing_analysis}
\end{figure}

We analyze the impact of our adaptive layer depth approach on oversmoothing in the vertex classification task by progressively increasing the layer depth up to 64 while measuring classification accuracy. Notably, we set $k$ = 1 to allow the aggregation stopping mechanism to remain effective even at large depths. As shown in Figure \ref{fig:oversmoothing_analysis}, our method exhibits substantial robustness against oversmoothing, achieving a significant performance margin over baseline models across both datasets.

\subsubsection{Comparison with Existing Curvature Notions}

\begin{table}[H]
\resizebox{1\columnwidth}{!}{
\begin{tabular}{c|cc|c c} 
\toprule
\multirow{3}{*}{Methods} & \multicolumn{2}{c|}{Vertex Classification} & \multicolumn{2}{c}{Vertex Regression} \\ 
\cmidrule(lr){2-5} 
 & {\small \multirow{2}{*}{Citeseer}} & {\small \multirow{2}{*}{Cornell}} & \multicolumn{2}{c}{\small Network Science }  \\ 
\cmidrule(lr){4-5} 
 &  &  & {\small IC} & {\small LT}  \\ 
\toprule
{GCN} & {76.68 \sd{1.64}}  & {57.03 \sd{4.67}}  & {0.270 \sd{0.019}} & {0.190 \sd{0.012}}  \\
\midrule
{GCN+ORC} & {78.49 \sd{0.35}}  & {57.83 \sd{4.22}}  & {0.251 \sd{0.006}} & {0.170 \sd{0.023}}  \\
{GCN+FRC} & {78.51 \sd{0.47}}  & {58.92 \sd{4.80}}  & {0.253 \sd{0.005}} & {0.178 \sd{0.024}}  \\
{GCN+JLC} & {77.03 \sd{0.56}}  & {58.64 \sd{3.20}}  & {0.239 \sd{0.010}} & {0.155 \sd{0.011}}  \\
{GCN+ERC} & {78.70 \sd{0.66}}  & {59.72 \sd{2.54}}  & {0.249 \sd{0.007}} & {0.155 \sd{0.029}}  \\
\midrule
{GCN+BC} & {78.10 \sd{0.26}}  & {45.68 \sd{3.51}}  & {0.248 \sd{0.005}} & {0.147 \sd{0.013}}  \\
{GCN+Degree} & {77.87 \sd{0.45}}  & {44.59 \sd{2.76}}  & {0.243 \sd{0.006}} & {0.159 \sd{0.019}}  \\
\midrule
{GCN+BEC} & {\textbf{80.43 \sd{0.88}}}  & {\textbf{61.62 \sd{4.64}}}  & {\textbf{0.233 \sd{0.010}}} & {\textbf{0.123 \sd{0.026}}}  \\
\bottomrule
\end{tabular}}
\caption{Ablation study on curvature notions. For vertex classification, accuracy ± std(\%) is reported; for vertex regression, MAE ± std is used. The best results are highlighted in \textbf{bold}.}
\label{Tab:curvature_ablation}\vspace*{-0.3cm}
\end{table}

In this experiment, we integrate existing curvature notions by replacing our learnable curvature with these predefined curvatures in our adaptive layer depth approach and evaluate their performance. We consider four curvatures from GNN literature—Ollivier-Ricci \textbf{(ORC)} \cite{southern2023expressive,toppingunderstanding}, Forman Ricci \textbf{(FRC)} \cite{fesser2024mitigating}, Jost and Liu \textbf{(JLC)} \cite{giraldo2023trade}, and Effective Resistance \textbf{(ERC)} \cite{devriendt2022discrete}—along with two structural descriptors, betweenness centrality \textbf{(BC)} and degree. These curvatures are precomputed before training and interpreted consistently: higher values indicate denser local structures, while lower values correspond to sparser regions. Note that for any edge-based curvature, vertex curvature is computed as the average of adjacent edge curvatures. The results are summarized in Table~\ref{Tab:curvature_ablation}.  

Our approach, which learns curvature in a data-driven manner, significantly outperforms all precomputed curvatures in both downstream tasks by capturing task-specific information. Additionally, we compare our method with existing curvature-based rewiring approaches and demonstrate a substantial performance improvement (please see the Appendix).

\subsubsection{Runtime and Parameter Complexity analysis}
\label{sec:runtime_analysis}

We evaluate scalability on large datasets, fixing the model depth at 4 and the hidden layer size at 256 for all methods. For our approach, we consider \( N \) in the set \(\{1,3,5\}\). We adopt GCN and GIN as the base models for ogbn-arxiv and ogbg-moltox21, respectively. Each model is assessed on parameter count, average runtime (over 100 iterations), and accuracy/RoC, as shown in Table~\ref{tbl:complexity_analysis}.

\begin{table}[h!]
\centering
\resizebox{1.0\columnwidth}{!}{
\begin{tabular}{lccc|ccc}
\toprule
\multirow{2}{*}{Methods} & \multicolumn{3}{c}{ogbn-arxiv } & \multicolumn{3}{c}{ogbg-moltox21} \\
\cmidrule(lr){2-4} \cmidrule(lr){5-7}
 & \#Param (k) & Time (s) & Acc. (\%) & \#Param (k) & Time (s) & Roc. (\%) \\
 
\midrule
Baseline   &  176  & 0.30 & 70.87 & 1118 & 4.83 & 74.84 \\
+BEC (N=1)   & 179  & 0.39 & 71.14 &  1123  & 6.90 &  75.22\\
+BEC (N=3)  & 184   & 0.52 & 71.76 & 1134 & 8.98 &  75.53\\
+BEC (N=5)  & 189 & 0.65 & 71.82 & 1144   & 10.39 &  75.76\\
\bottomrule
\end{tabular}}
\caption{Runtime and parameter complexity comparison.}
\label{tbl:complexity_analysis}\vspace*{-0.3cm}
\end{table}

As \( N \) increases, parameter count and runtime grow moderately, ensuring computational feasibility. Our approach keeps parameter growth minimal by using MLPs to estimate curvature instead of expanding the GNN’s parameter space. This design choice ensures that parameter learning of GNN remains efficient. With the same model capacity (i.e., fixed hidden layer size), our approach enhances representation power and outperforms the baseline, offering a meaningful improvement without excessive computational cost.

In the appendix, we present two additional experiments: (1) a visualization analysis, including 2D embedding visualization and curvature visualization of our approach, and (2) an evaluation of different threshold selection strategies, highlighting their influence on the depth-adaptive layer mechanism.

\section{Conclusion, and Future Work}

In this work, we propose a novel mechanism for GNNs that learns vertex-wise curvature in a data-driven manner. Our framework builds on Bakry-Émery curvature, which theoretically captures geometric intricacies and information diffusion in graphs from a functional perspective.By establishing a theoretical link between feature distinction in GNNs and vertex curvature, we introduce an adaptive layer depth mechanism that leverages vertex curvature to enhance the representation capacity of GNNs. Integrated with existing GNN architectures, the proposed approach consistently improves performance across diverse downstream tasks.

A limitation of our current approach is that the threshold controlling the adaptive layer depth mechanism is set as a hyperparameter. In future work, we aim to develop a method for learning this threshold directly from the graph structure and its underlying data distribution in an end-to-end manner.

\clearpage
\balance
\bibliography{references}
\bibliographystyle{ACM-Reference-Format}
\clearpage
\appendix
\section*{Appendix}
\section{Proofs}

\thmmt*
\begin{proof}
First, we explicitly define the operators \cite{lin2010ricci}:
\[
\Gamma(f,g)(x) = \frac{1}{2}\left( L(fg) - f L g - g L f \right)(x)
\]
\[
\Gamma_2(f, f)(x) = \frac{1}{2} \left( L \Gamma(f,f)(x) - \Gamma(f, Lf)(x) \right)
\]
Under the heat semigroup evolution \( f_t = e^{-tL} f_0 \) \cite{bailleul2016heat}, for any initial function \( f_0 \in \ell^2(V) \), we have:
\[
\frac{d}{dt} |\nabla f_t|^2(x) = -2\Gamma(f_t, Lf_t)(x)
\]
Here, we differentiate the squared gradient with respect to time. The operator \( \Gamma(f_t, Lf_t)(x) \) represents the interaction between the graph Laplacian and the gradient of the function at each point in time. Next, we apply the local \( \Gamma_2 \)-criterion, which gives the lower bound:
\[
\Gamma_2(f_t, f_t)(x) \geq \kappa(x) |\nabla f_t|^2(x)
\]
This criterion tells us that the curvature \( \kappa(x) \) at each vertex provides a lower bound for the second-order derivative of the function \( f_t \) in terms of the gradient squared. Using this inequality, we obtain:
\[
\frac{d}{dt} |\nabla f_t|^2(x) \leq -2\kappa(x) |\nabla f_t|^2(x)
\]
The negative sign indicates that the gradient squared decreases over time due to the decay effect imposed by the curvature \( \kappa(x) \). This step shows how the curvature influences the rate of change of the gradient over time. Now, we apply Grönwall's inequality \cite{evans2022partial}, a standard tool for solving differential inequalities:
\[
|\nabla f_t|^2(x) \leq e^{-2\kappa(x)t} |\nabla f_0|^2(x)
\]
Grönwall's inequality gives an upper bound for the evolution of \( |\nabla f_t|^2(x) \) based on the initial condition \( |\nabla f_0|^2(x) \). The exponential decay reflects the influence of the curvature \( \kappa(x) \) on the gradient. To achieve \( |\nabla f_t|^2(x) \leq \epsilon |\nabla f_0|^2(x) \) for any \( \epsilon > 0 \), we solve for \( t \) to get:
\[
t \geq \frac{\log(1/\epsilon)}{2\kappa(x)}
\]
This bound provides the required time to ensure that the gradient squared at each vertex decays to the desired level. The time \( t \) depends on the initial condition and the local curvature at each vertex. Finally, we conclude that the local mixing time \( \tau_x(\epsilon) \), which represents the time required for the gradient to decay by a factor of \( \epsilon \), satisfies the inequality:
\[
\tau_x(\epsilon) \leq \frac{\log(1/\epsilon)}{\kappa(x)}
\]

This inequality holds for any initial condition \( f_0 \in \ell^2(V) \), where \( \ell^2(V) \) is the space of square-summable functions, meaning that \( \sum_{x \in V} |f_0(x)|^2 < \infty \). This ensures that the function \( f_0 \) has finite energy, and the solutions to \( f_t = e^{-tL} f_0 \) are well-defined for all \( t \geq 0 \).

\end{proof}

\thmfd*

\begin{proof}
From the definition of \( \tau_x(\epsilon) \) in Theorem \ref{thm:mixing}, we know that:

\[
|\nabla f_t|^2(x) \leq \epsilon |\nabla f_0|^2(x) \quad \text{for all} \quad t \geq \tau_x(\epsilon).
\]

Using the upper bound for \( \tau_x(\epsilon) \), we have:

\[
\tau_x(\epsilon) \leq \frac{\log(1/\epsilon)}{\kappa(x)}.
\]

Thus, for all \( t \geq \frac{\log(1/\epsilon)}{\kappa(x)} \), it follows that:

\[
|\nabla f_t|^2(x) \leq \epsilon |\nabla f_0|^2(x).
\]

Now, to express this in the desired form, observe that the exponential decay bound for \( |\nabla f_t|^2(x) \) is given by:

\[
|\nabla f_t|^2(x) \leq |\nabla f_0|^2(x) \exp(-\kappa(x) t).
\]

 Each layer in the GNN approximates the heat flow for a time step \( \Delta t \), so after \( l \) layers, the time is \( t = l \Delta t \). Substituting this into the definition of \( D(x,l) \), we get:
\[
D(x,l) = \frac{|\nabla f_l|^2(x)}{|\nabla f_0|^2(x)} \leq \exp(-\kappa(x) l \Delta t).
\]

For the second part, to maintain \( D(x,l) \geq \epsilon \), we require:
\[
\exp(-\kappa(x) l \Delta t) \geq \epsilon.
\]
Taking the logarithm of both sides:
\[
-\kappa(x) l \Delta t \geq \log(\epsilon),
\]
which simplifies to:
\[
l \leq \frac{\log(1/\epsilon)}{\kappa(x) \Delta t}.
\]
Thus, the number of layers must satisfy the given bound to ensure that \( D(x,l) \geq \epsilon \).
\end{proof}

\begin{figure*}[t!]
    \centering
    \subfloat[Original Graph]{
        \includegraphics[width=0.23\textwidth]{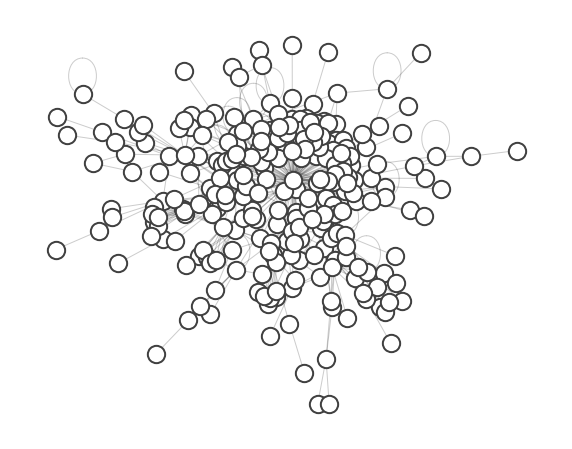}
        \label{fig:img1}
    }
    \hfill
    \subfloat[Ollivier Ricci Curvature]{
        \includegraphics[width=0.23\textwidth]{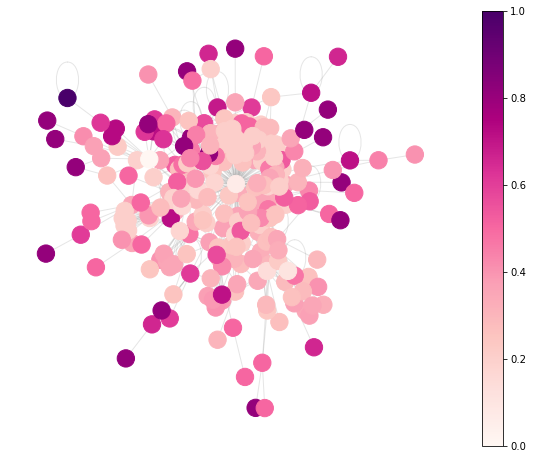}
        \label{fig:img2}
    }
    \hfill
    \subfloat[Forman Ricci Curvature]{
        \includegraphics[width=0.23\textwidth]{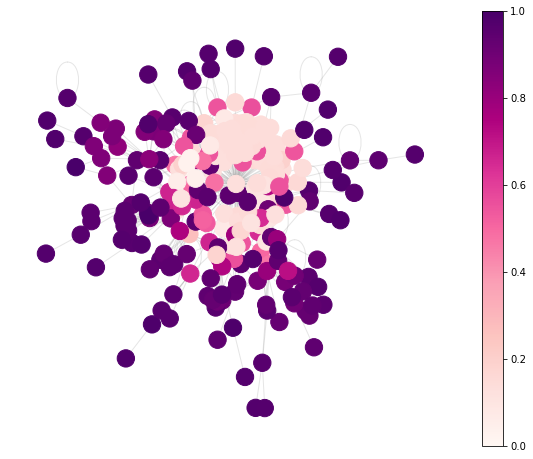}
        \label{fig:img3}
    }
    \hfill
    \subfloat[Learned Bakry-Émery Curvature]{
        \includegraphics[width=0.23\textwidth]{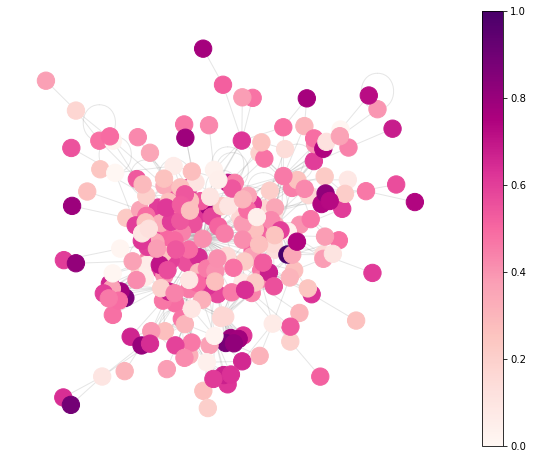}
        \label{fig:img4}
    }
    \caption{Curvature visualization for Wisconsin dataset}
    \label{fig:curvature_visualization}
\end{figure*}

\begin{figure*}[t!]
    \centering
    \subfloat[MixHop]{
        \includegraphics[width=0.23\textwidth]{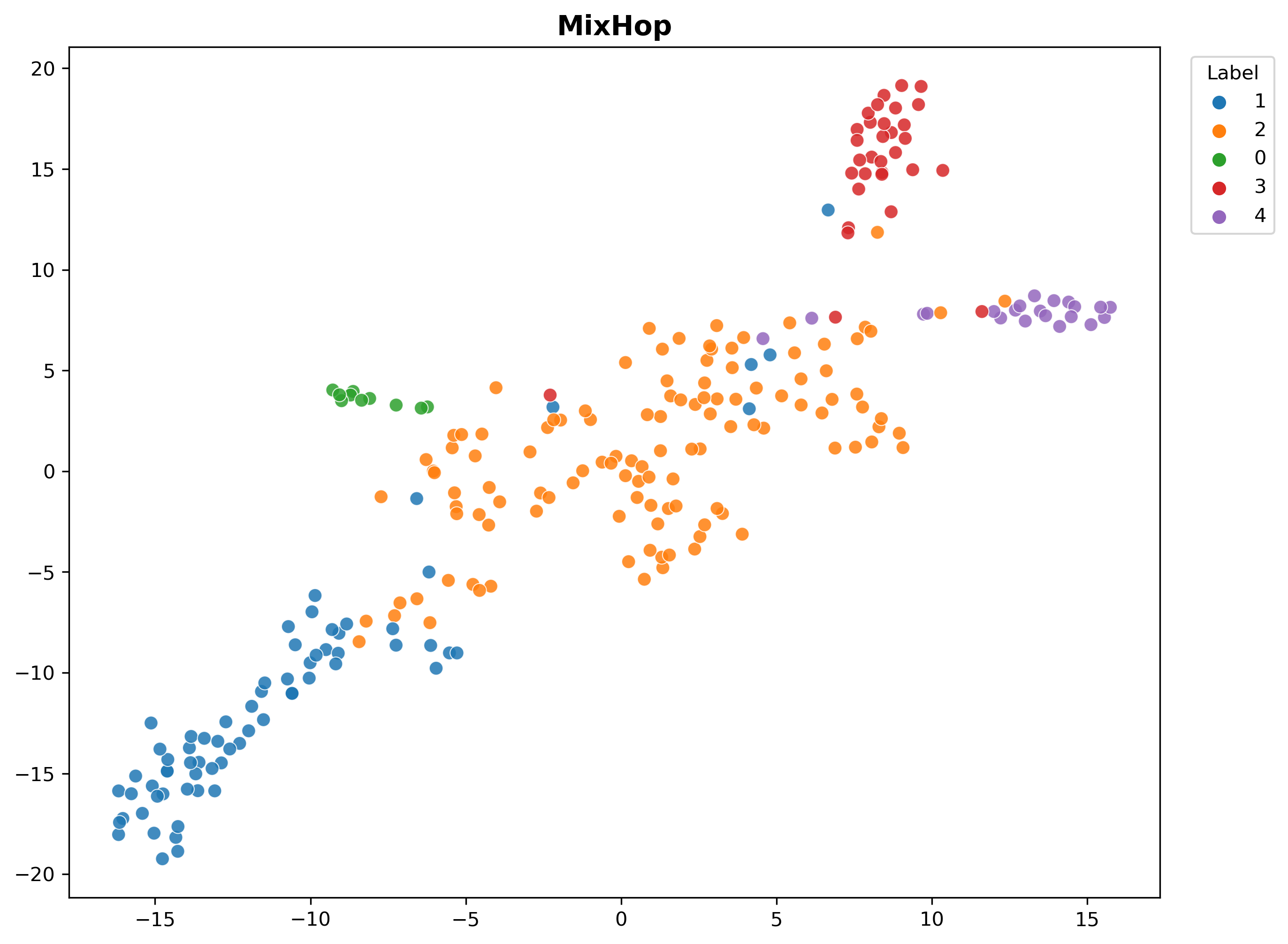}
        \label{fig:img5}
    }
    \hfill
    \subfloat[MixHop+ORC]{
        \includegraphics[width=0.23\textwidth]{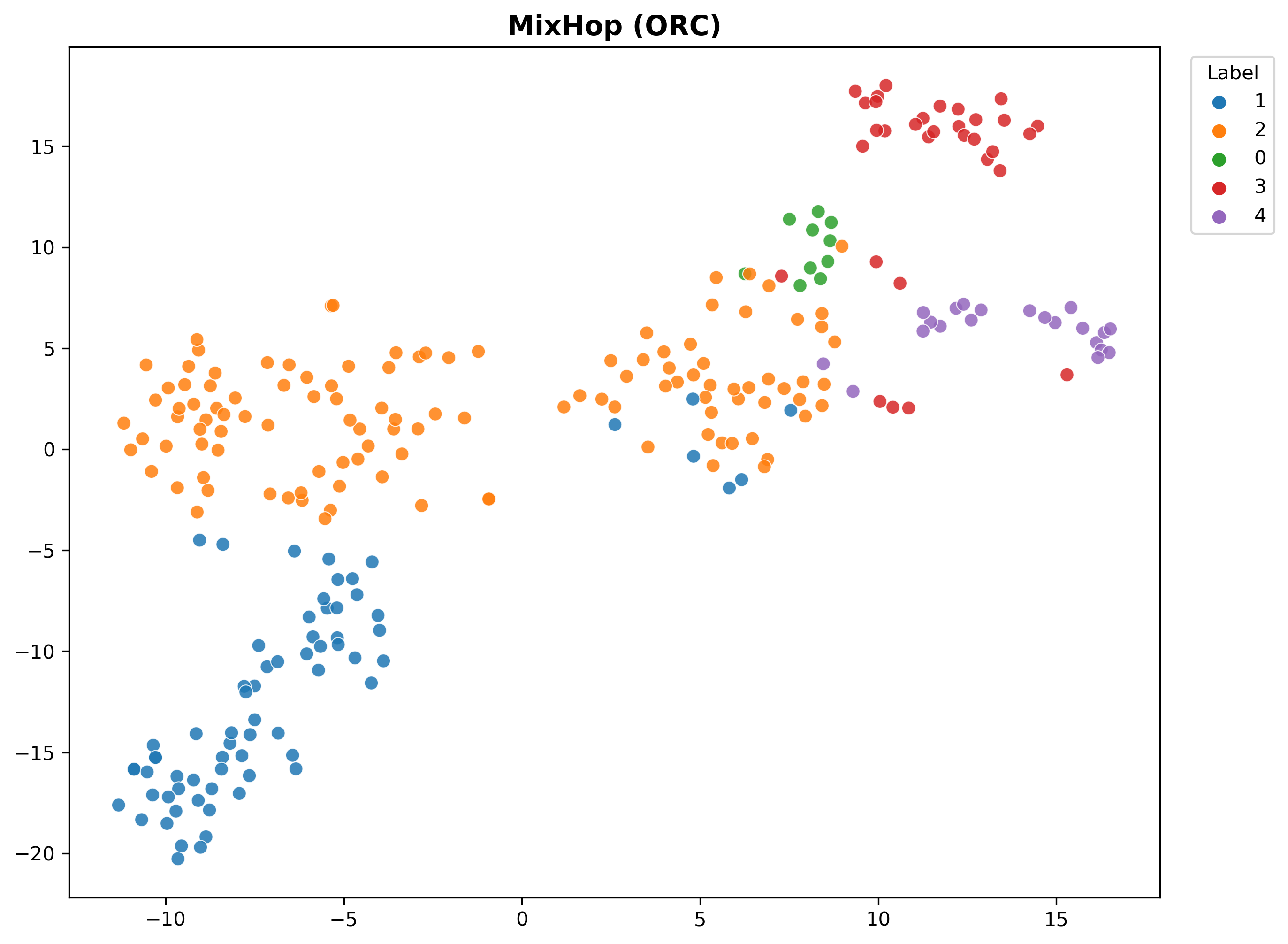}
        \label{fig:img6}
    }
    \hfill
    \subfloat[MixHop+FRC]{
        \includegraphics[width=0.23\textwidth]{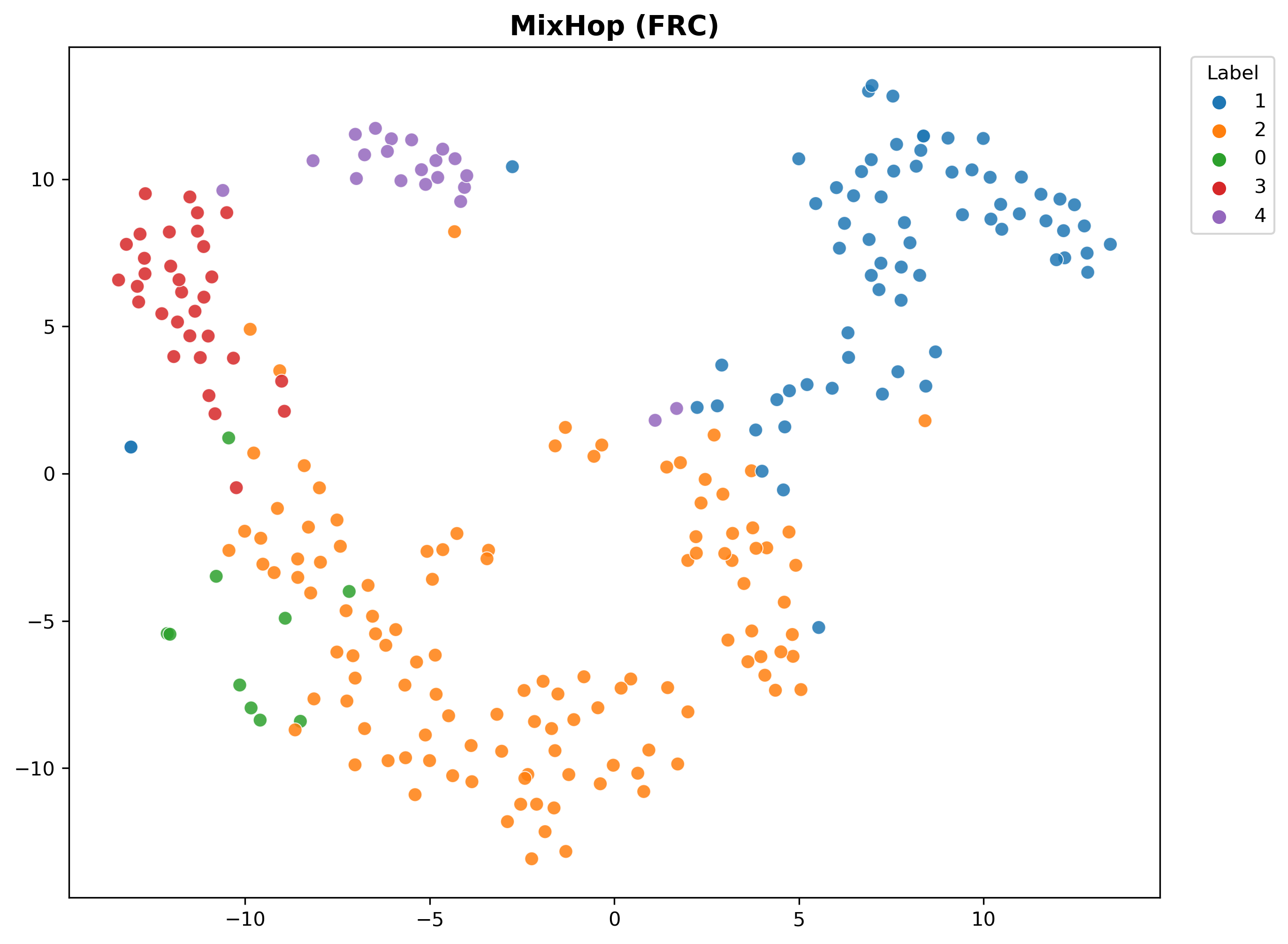}
        \label{fig:img7}
    }
    \hfill
    \subfloat[MixHop+BEC]{
        \includegraphics[width=0.23\textwidth]{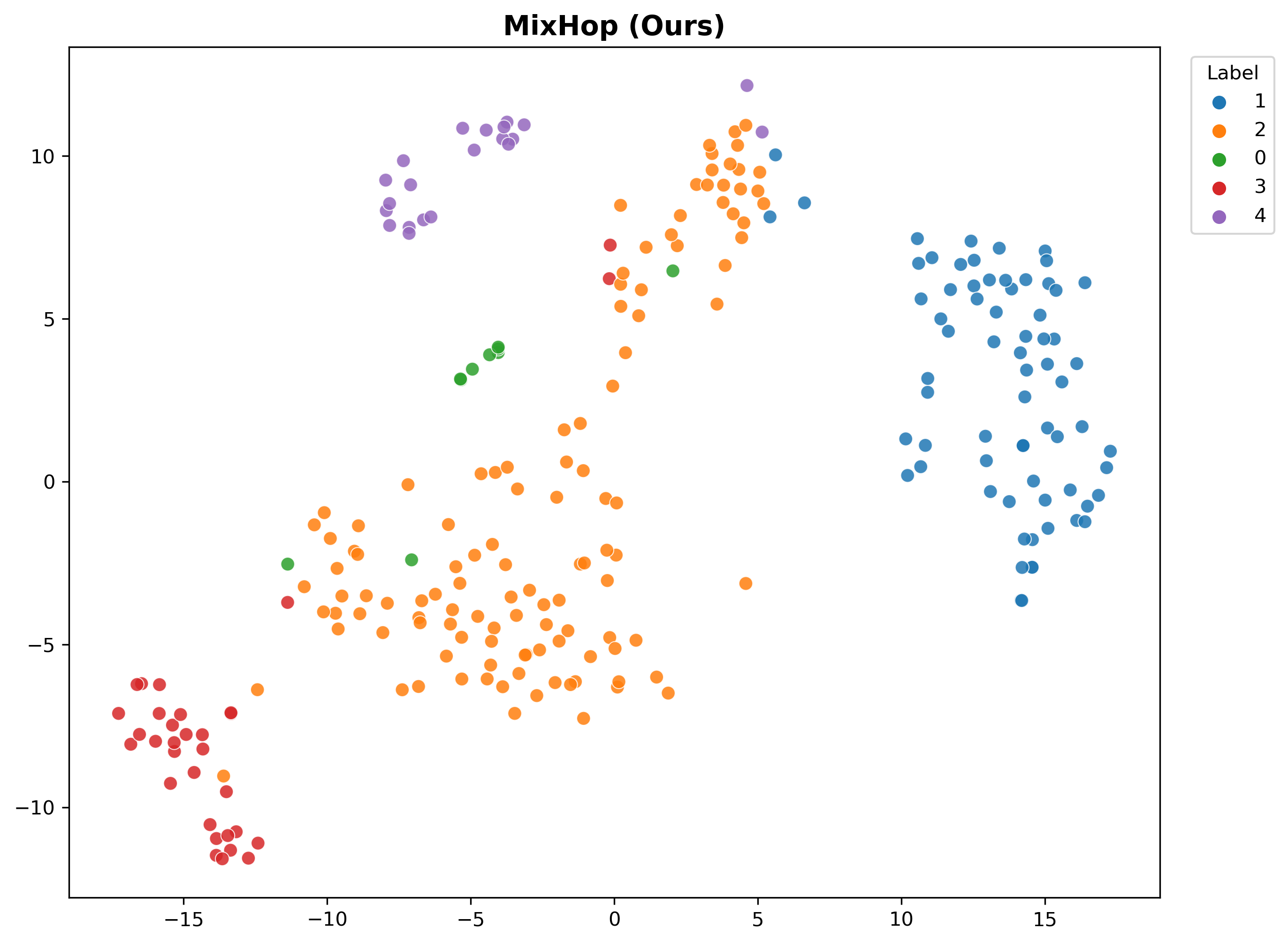}
        \label{fig:img8}
    }
    \caption{2D Vertex embedding visualization for Wisconsin dataset}
    \label{fig:embedding_visualization}
\end{figure*}

\section{Supplementary Experimental Details}

\subsection{Downstream Tasks}

We evaluate our model in the following downstream tasks.
\\
\textbf{Vertex classification:} The goal of this task is to predict the label of individual vertices in a graph, utilizing both the vertex features and the structural information of the graph.  
\\
\textbf{Vertex regression:} Vertex regression aims to predict continuous values for individual vertices in a graph. In our work, we focus on influence estimation \cite{xia2021deepis,ling2023deep}, a fundamental and challenging problem in network science. Specifically, we consider a diffusion model with an initial set of activated vertices and seek to predict the activation probability of each vertex once the diffusion process has converged. This task involves GNN leveraging the graph's structure and adapting the dynamics of the diffusion process to accurately estimate the likelihood of vertex activation.
\\
\textbf{Graph classification: }Graph classification involves predicting a single label for an entire graph based on its structure and vertex features. 
\\
\textbf{Graph regression: } Graph regression involves predicting continuous values for an entire graph, with the goal of estimating a scalar value that characterizes the graph's overall properties.

\subsection{Baselines}

We employ the following baselines for our evaluations.

\textbf{(1) Vertex classification and regression}  
\begin{itemize} \item \textbf{GCN} \cite{kipf2016semi} utilizes spectral graph convolution to aggregate neighborhood vertex features.
\item \textbf{GraphSAGE} \cite{hamilton2017inductive} incorporates neighborhood sampling and supports various aggregation functions (e.g., mean, LSTM, or sum) to enhance scalability.
\item \textbf{GAT} \cite{velivckovic2018graph} employs attention mechanisms to assign adaptive weights to neighboring vertices, enabling more effective message aggregation.
\item \textbf{SGC} \cite{wu2019simplifying} simplifies GCN by eliminating nonlinearities and collapsing weight matrices, resulting in a more efficient linear model.
\item \textbf{MixHop} \cite{abu2019mixhop} aggregates information from multiple-hop neighborhoods, capturing higher-order interactions within the graph.
\item \textbf{DIR-GNN} \cite{rossi2024edge} is a heterophilic-specific GNN that improves the message passing by incorporating directional information.
\end{itemize}

\textbf{(2) Graph classification and regression }  
\begin{itemize}
    \item \textbf{GIN} \cite{xu2018powerful} employs MLPs and sum aggregations to distinguish different graph structures, with its distinguishing power upper-bounded by the 1-WL algorithm.
    \item \textbf{GCN} \cite{kipf2016semi} is a GNN based on spectral graph convolution, and its graph distinguish power is upper-bounded by 1-WL algorithm.
    \item \textbf{ID-GNN} \cite{you2021identity} enhances graph distinguishability by incorporating vertex identity as a coloring mechanism, achieving expressiveness beyond the 1-WL algorithm.
    \item \textbf{GraphSNN} \cite{wijesinghe2022new} injects overlapping subgraph-based structural information into message passing, enhancing its expressiveness beyond the 1-WL algorithm.
    \item \textbf{NC-GNN} \cite{liuempowering} incorporates edges between neighboring vertices, in addition to neighbors, in its message-passing step, resulting in higher expressive power compared to the 1-WL algorithm.
\end{itemize}

\subsection{Model Hyper-Parameters} 

In our experiments, we search for hyper-parameters within the following ranges: the number of layers is selected from $\{2, 3, 4, 5\}$, $k$ from $\{5, 10, 15, 20, 30, 40\}$, $N$ from $\{1, 2, 3, 4, 5\}$, dropout from \\ $\{0.0, 0.1, 0.2, 0.5, 0.6, 0.9\}$, learning rate from $\{0.005, 0.009, 0.01, 0.1\}$, batch size from $\{32, 64\}$, hidden units from $\{32, 64, 128, 256, 300\}$, weight decay from $\{5e-4, 9e-3, 1e-2, 1e-1\}$, and the number of epochs from $\{100, 200, 500\}$. The Adam optimizer \cite{kingma2014:adam} is used for optimization. Further, we employ $\lambda = 1$ for all the experiments. For the ZINC and ogbg-moltox21 datasets, the learning rate decays by a factor of 0.5 after every 10 epochs.

\begin{table*}
\begin{tabular}{lcccc}
\hline
Dataset & Task Type & \# Vertices & \# Edges & \# Classes \\
\hline
Jazz & Vertex Regression & 198 & 2,742 & 1 \\
Network Science & Vertex Regression & 1,565 & 13,532 & 1 \\
Cora-ML & Vertex Regression & 2,810 & 7,981 & 1 \\
Power Grid & Vertex Regression & 4,941 & 6,594 & 1 \\
\hline
Cora & Vertex Classification & 2,708 & 5,429 & 7 \\
Citeseer & Vertex Classification & 3,327 & 4,732 & 6 \\
Pubmed & Vertex Classification & 19,717 & 44,338 & 3 \\
Actor & Vertex Classification & 7,600 & 33,544 & 5 \\
Squirrel & Vertex Classification & 5,201 & 217,073 & 5 \\
Wisconsin & Vertex Classification & 251 & 499 & 5 \\
Cornell & Vertex Classification & 183 & 295 & 5 \\
Texas & Vertex Classification & 183 & 309 & 5 \\
ogbn-arxiv & Vertex Classification & 169,343 & 1,166,243 & 40 \\
\hline
\end{tabular}
\caption{Dataset statistics for vertex classification and regression.}
\label{tbl:statistics_node_classification}
\end{table*}

\begin{table*}[ht]
\centering
\resizebox{1.5\columnwidth}{!}{
\begin{tabular}{lccccc} 
\toprule
{Datasets} & {Task Type} & \# Graphs & Avg \#Vertices & Avg \#Edges & \# Classes \\ 
\toprule
{MUTAG} & {Graph Classification} & {188} & {17.93} & {19.79} & {2} \\
{PTC\_MR} & {Graph Classification} & {344} & {14.29} & { 14.69} & {2} \\
{BZR} & {Graph Classification} & {405} & {35.75} & { 38.36} & {2} \\
{COX2} & {Graph Classification} & {467} & { 41.22} & {43.45} & {2} \\
{IMDB-BINARY} & {Graph Classification} & {1,000} & {19.77} & {96.53} & {2} \\
{PROTEINS} & {Graph Classification} & {1,113} & {39.06} & { 72.82} & {2} \\
ogbg-moltox21 & {Graph Classification} & {7,831} & {18.6} & {19.3} & {2} \\
\midrule
{ZINC} & {Graph Regression} & {12,000} & {23.1} & {49.8} & {1} \\
\bottomrule
\end{tabular}}
\caption{Dataset statistics for graph classification and regression.}
\label{tbl:statistics_graph_classification}
\end{table*}

\subsection{Benchmark Dataset Statistics}

Tables ~\ref{tbl:statistics_node_classification} 
 and ~\ref{tbl:statistics_graph_classification} present an overview of the statistics for the datasets used in our experiments. Specifically, Table~\ref{tbl:statistics_node_classification} covers statistics for vertex classification and regression, while Table ~\ref{tbl:statistics_graph_classification} provides dataset statistics related to graph classification and regression tasks.

\subsection{Computational Resources}
All the experiments were carried out on a Linux server with an Intel Xeon W-2175 2.50GHz processor (28 cores), an NVIDIA RTX A6000 GPU, and 512GB of RAM.

\subsection{Additional Experiments}

\subsubsection{Visualization Analysis}

We present a visualization analysis of vertex curvature for differnt curvature notions in Figure \ref{fig:curvature_visualization} using the Wisconsin dataset. In our approach, curvature is learned with MixHop as the base model. For two Ricci curvature methods, vertex curvature is computed as the average of adjacent edge curvatures. All the vertex curvature values are normalized between 0 and 1 for visualization purposes. The comparison reveals that our method captures heterogeneous curvature patterns with greater granularity compared to other approaches. This level of detail is well justified, as Wisconsin is a heterophilic dataset.

 Figure \ref{fig:embedding_visualization} provides vertex embedding visualization using t-SNE \cite{van2008visualizing}. MixHop is employed as the base model, and we evaluate its vertex classification performance on the Wisconsin dataset. The colors represent different class labels in the dataset. The comparison of embeddings shows significant improvements in our approach. The most notable enhancement is the formation of well-separated clusters. Our method improves cluster boundary definition, reducing overlap between class labels. Overall, our method captures both fine-grained relationships and broader community structures in embedding, leading to better classification. This is well supported by the curvature visualization, as our approach effectively captures heterogeneous curvature patterns that other methods fail to detect.

\subsubsection{Comparison with Curvature-based Rewiring Approaches}

In this experiment, we compare our approach with existing curvature-driven rewiring methods for vertex classification. We employ the experimental setup followed by \citet{fesser2024mitigating} and evaluate four rewiring techniques: AFR \cite{fesser2024mitigating}, BORF \cite{nguyen2023revisiting}, SDRF \cite{toppingunderstanding}, and FOSR \cite{karhadkarfosr}.

\begin{table}[H]
\centering
\resizebox{\columnwidth}{!}{%
\begin{tabular}{lcccccc}
\hline
Method & Cora & Citeseer & Texas & Cornell & Wisconsin \\
\hline
GCN    & 86.6 \sd{0.8} & 71.7 \sd{0.7} & 44.1 \sd{0.5} & 46.8 \sd{3.0} & 44.6 \sd{2.9} \\
GCN+AFR   & \textbf{{88.1} \sd{0.5}} & 74.4 \sd{1.0} & 51.4 \sd{0.5} & 49.7 \sd{3.4} & 52.2 \sd{2.4} \\
GCN+BORF  & 87.9 \sd{0.7} & 73.4 \sd{0.6} & 48.9 \sd{0.5} & 48.1 \sd{2.9} & 46.5 \sd{2.6} \\
GCN+SDRF & 86.4 \sd{2.1} & 72.6 \sd{2.2} & 43.6 \sd{1.2} & 43.1 \sd{1.2} & 47.1 \sd{1.0}  \\
GCN+FOSR & 86.9 \sd{2.0} & 73.5 \sd{2.0} & 46.9 \sd{1.2} & 43.9 \sd{1.1} & 48.5 \sd{1.0} \\
\midrule
GCN+BEC & 86.9 \sd{0.3} & \textbf{{77.3} \sd{0.4}} & \textbf{{64.4} \sd{2.5}}  & \textbf{{52.9} \sd{3.1}} & \textbf{{66.9} \sd{2.4}} \\
\hline
\end{tabular}%
}
\caption{Vertex classification accuracy ± std (\%) comparison with curvature-based rewiring approaches. Baseline results are taken from \citet{fesser2024mitigating}. The best results are highlighted in \textbf{bold}.}
\label{tab:rewiring_comparison}
\end{table}

As shown in Table~\ref{tab:rewiring_comparison}, our method significantly outperforms these existing approaches, highlighting that incorporating curvature directly into the learning pipeline provides a notable performance boost, compared to integrating it solely into the rewiring process.

\subsubsection{Comparison with different threshold selection mechanisms}
One of the key design decisions in our adaptive layer mechanism is determining the threshold for each layer. To gain better empirical insights into this decision, we conduct an experiment for vertex classification using GCN as the base model, with a depth of 20 layers. In this experiment, the threshold \( k \) in each layer is determined by selecting from four common distributions: \textbf{Fixed}, where \( k \) remains constant across all layers; \textbf{Power-law}, where \( k \) is sampled from a power-law distribution; \textbf{Normal (Gaussian)}, where \( k \) is sampled from a normal (Gaussian) distribution; and \textbf{Linear (increasing)}, where \( k \) increases progressively across layers. The results of this experiment are presented in Figure \ref{fig:threshold_comparison}.

\begin{figure}[H]
    \centering
    \begin{tikzpicture}[scale=0.83] 
        \begin{axis}[
            ybar,
            bar width=2.5mm, 
            xtick={0,0.1,0.2},
            xticklabels={Cora, Citeseer, Wisconsin},
            ylabel={Accuracy (\%)},
            ylabel style={font=\bfseries},
            x tick label style={rotate=0,anchor=north},
            legend style={at={(0.5,-0.25)}, anchor=north, legend columns=-1, font=\small},
            xmin=-0.05, 
            xmax=0.25,  
            enlarge x limits=0.15, 
        ]
            \addplot coordinates{(0,78.29)(0.1,70.10)(0.2,57.35)};
            \addlegendentry{Fixed}
            \addplot coordinates{(0,75.04)(0.1,69.16)(0.2,57.35)};
            \addlegendentry{Power Law}
            \addplot coordinates{(0,77.43)(0.1,70.10)(0.2,60.78)};
            \addlegendentry{Normal}
            \addplot coordinates{(0,74.25)(0.1,70.05)(0.2,48.52)};
            \addlegendentry{Linear}
        \end{axis}
    \end{tikzpicture}
    \caption{Comparison of different threshold selection mechanisms}
    \label{fig:threshold_comparison}
\end{figure}
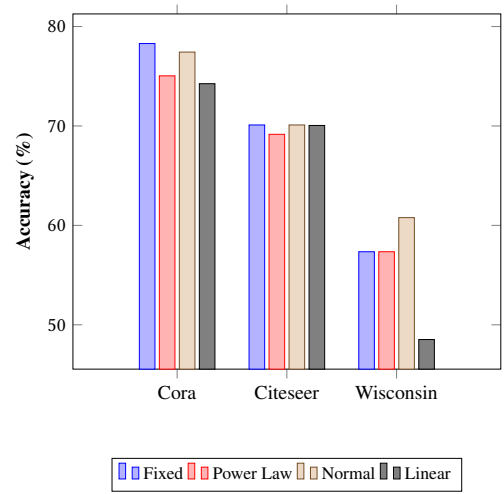

Based on the results, the fixed and normal distributions exhibit superior performance compared to the other approaches. However, it is well-established that GNNs generally perform better with shallow layers. Since it is challenging to effectively sample from a normal distribution with a limited number of samples, we opt for the Fixed threshold. This choice is not only more practical but also simple.

\end{document}